\documentclass[a4paper,fleqn]{cas-dc}

\usepackage[authoryear,longnamesfirst]{natbib}

\def\tsc#1{\csdef{#1}{\textsc{\lowercase{#1}}\xspace}}
\tsc{WGM}
\tsc{QE}

\usepackage{microtype}
\usepackage{graphicx}
\usepackage{subcaption}
\usepackage{booktabs}
\usepackage{multirow}
\usepackage{amsmath}
\usepackage{amssymb}
\usepackage{mathtools}
\usepackage{amsthm}

\newtheorem{theorem}{Theorem}  
\newtheorem{proposition}[theorem]{Proposition}

\newtheorem{corollary}[theorem]{Corollary}
\theoremstyle{definition}

\newdefinition{rmk}{Remark}
\newtheorem{remark}[theorem]{Remark}

\usepackage{xcolor}

\begin{document}
\let\WriteBookmarks\relax
\def\floatpagepagefraction{1}
\def\textpagefraction{.001}

\shorttitle{Spectral alignment for time series latent flows}    

\shortauthors{Carvajal Reyes and Tobar}  

\title[mode = title]{Time series generation with spectrally aligned latent flow matching}  



\author[1]{Camilo Carvajal~Reyes}[orcid=0009-0000-8334-1668]
\cormark[1]
\ead{c.carvajal-reyes24@imperial.ac.uk}
\ead[url]{}

\credit{Writing - original draft, Writing: review \& editing, Methodology, Conceptualization, Formal analysis, Visualisation}

\affiliation[]{organization={Department of Mathematics, Imperial College London},
    addressline={180 Queen's Gate}, 
    city={London},
    citysep={}, 
    postcode={SW7 2HR}, 
    state={},
    country={United Kingdom}}

\author[2]{Felipe Tobar}[orcid=0000-0003-2486-3583]
\ead{f.tobar@imperial.ac.uk}
\ead[url]{}

\credit{Writing: review \& editing, Writing, Validation, Formal analysis, Supervision}

\cortext[1]{Corresponding author}



\begin{abstract}
Latent flow models have proven to be a reliable and cost-effective method for time series generation.
However, the latent compression induces unwanted artefacts, such as a spectral mismatch with respect to the underlying dataset, thus hindering their use as training surrogates.
In this article, we propose a spectrally-aligned latent-flow time series generator, where the latent space for flow matching is trained to preserve dynamical properties that are relevant for the suitability of synthetic samples. We find that incorporating fine-tuning losses based on canonical signal representations such as the Fourier, wavelet and signature transforms helps overcome these issues.
The interpretability of these transformations allows us to ensure that the synthetic signals are aligned with the true ones in terms of relevant features, such as smoothness or targeted spectral content, as opposed to relying on pointwise reconstruction losses only.
We compare the proposed aligned models against a base latent-flow model and the state of the art over real-world long-range univariate and multivariate benchmark datasets. Our quantitative results validate the superiority of the proposed method in terms of its performance on metrics reflecting signal realness and computational efficiency, while being aligned to the training set with respect to its local structure.
\end{abstract}

\begin{keywords}
time series \sep flow matching \sep generative modelling \sep latent space \sep Fourier transform \sep wavelet \sep signatures
 \sep \sep \sep
\end{keywords}

\maketitle


\section{Introduction}

Probabilistic generative modelling for time series (TS) aims to synthesise signals following a reference distribution which is only available through observed samples. Under the unconditional generation setting, the synthetic series can be used for data augmentation or as training surrogates for large-scale downstream tasks. Upon conditioning, the generating distribution can be leveraged for canonical time series tasks such as imputation, forecasting or denoising.

Recent probabilistic models for multidimensional time series build on two related perspectives \citep{lipman_flow_2022}. First, diffusion models \citep{sohl-dickstein_deep_2015,song_generative_2019,ho_denoising_2020}, which generate series via denoising a Gaussian noise source, incurring a high sampling cost. Second, flow-matching models \citep{lipman_flow_2022}, which train a velocity field that transports from any suitable source distribution (e.g., Gaussian) towards a desired target distribution. In real-world applications involving large-dimensional data such as time series, both diffusion- and flow-based models may operate on latent spaces \citep{rombach_high-resolution_2022,dao_flow_2023} as a means to bound computational complexity. Naturally, this latent representation can be seen as a form of lossy compression, where achieving the desired computational improvement comes at the cost of a degradation in the quality of the synthetic samples. See Fig.~\ref{fig:freq_profile} for an illustration of these artefacts.

\begin{figure}[]
    \centering
    \includegraphics[width=\linewidth]{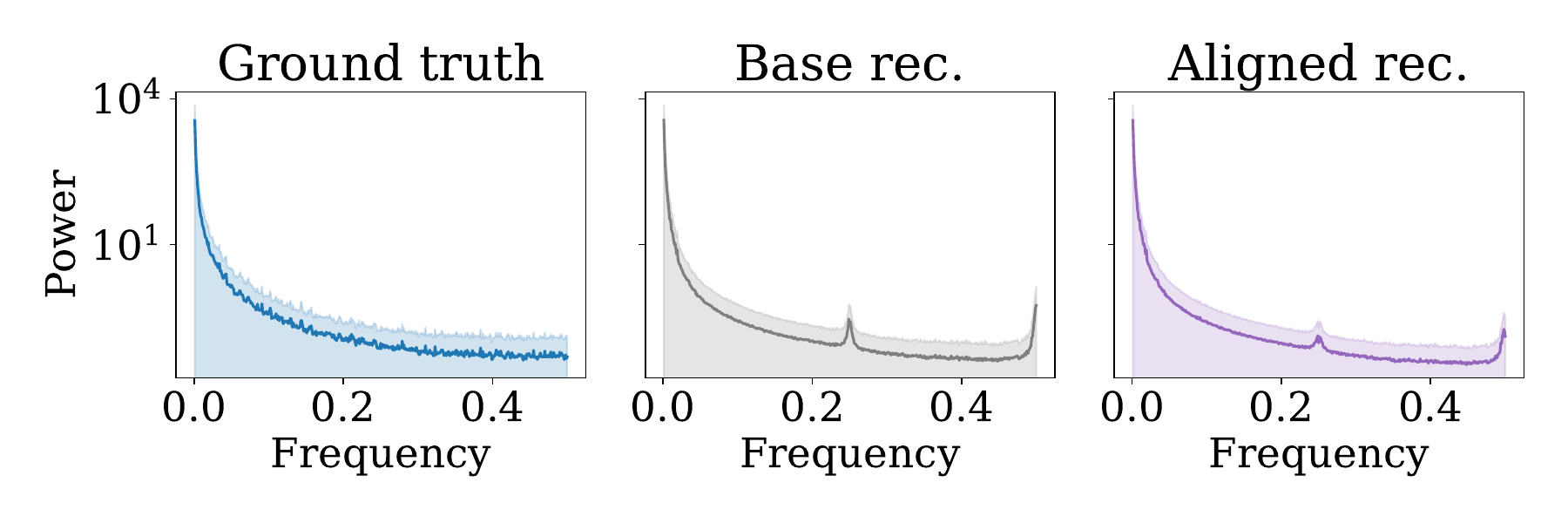}
    \caption{Fourier power spectrum of the Exchange rate dataset \citep{lai_modeling_2018} (mean $\pm$ 1 st.~dev.~, logscale): true samples (left), example of a non-aligned base flow model (centre), and our proposed aligned flow model (right). Observe how the non-aligned model exhibits a spectral artefact in the form of a peak at around frequency $0.25$, which is largely minimised by the proposed alignment strategy.}
    \label{fig:freq_profile}
\end{figure}

Hence, an effective latent flow method to sample realistic time series implies addressing these artefacts. These are a consequence of the spectral bias, which postulates that lower frequencies are prioritised when training neural networks \citep{rahaman_spectral_2019}.
We hypothesise that the latent space can be designed to reduce the detrimental effect that the latent compression has on the relevant dynamical features of the synthetic signals, such as the signal regularity, e.g., the smoothness of the underlying continuous path of the time series. The key element for this is using signal representations allowing us to identify those key features in order to shift the optimisation weights accordingly. In other words, given that the compressed representation can be interpreted as a budget constraint, we seek to allocate the computational budget to features that matter most.

We restrict ourselves to flow-based models with autoencoder-induced latent spaces, and implement the above rationale by training the encoder-decoder pair through a novel set of transform-consistency losses that promote the preservation of high-frequency features, directly in the latent space. This is implemented as a fine-tuning step that we refer to as ``alignment''. Our losses are based on different spectral representations of the series, leveraging the complementary properties of the Fourier, wavelet and signature transforms. Intrinsically, this methodology aims to guide the compression of the series to maintain the spectral content that determines regularity, rather than only minimising the Euclidean discrepancy between the true and reconstructed samples as customary in the design of the latent space. We provide an illustration of our pipeline in Figure \ref{fig:latent-flow-matching}. \\

\noindent The main contributions of this work are:
\begin{enumerate}
    \item The implementation of a base latent-flow model using an encoder-decoder pair trained with the Euclidean loss that performs well in temporal metrics at a remarkable reduction in sampling cost. This benchmarking of latent flow matching for TS has been largely unexplored and serves as motivation to tackle the issues arising from this compression.  
    \item A performance analysis of the base latent space, showing that the share of the reconstruction error concentrates in high-detail components. This indicates that the standard training objective under-penalises them, as a consequence of the well-known spectral bias \citep{rahaman_spectral_2019}, which undermines the samples realism.
    \item A family of loss functions, namely the transform-consistency losses, that promote the preservation of relevant spectral features in the latent representation. The flexibility of the losses allows us to combine them with a weighting scheme that prioritises higher-order features. The losses are then used to improve the latent space via fine-tuning. To the best of our knowledge, the use of general time series transforms to induce geometric structure in the context of generative modelling has not been studied.  
    \item Theoretical guarantees that our weighted transform-consistency losses can be regarded as matching the roughness of the input signal in the Fourier case, as assessed by the discrete Sobolev norm; and that the Signature-consistency loss can be regarded as an approximation of the norm corresponding to a signature kernel.
    \item Empirical quantitative evidence that fine-tuning the encoder-decoder pair with the transform-consistency losses improves the quality of the generated signals in terms of the discriminative score (which measures how difficult it is to distinguish real signals from synthetic ones), while maintaining the computational gain of the base flow-based model, as well as their performance in waveform-based metrics.
\end{enumerate}

The rest of the article is organised as follows: Sec.~\ref{sec:background} presents flow models, latent-space models and spectral representations; Sec.~\ref{sec:base_model} establishes the general latent flow matching procedure;  Sec.~\ref{sec:methodology} presents our proposed alignment method addressing the drawbacks of latent models, while Sec.~\ref{sec:norm_equivalences} shows the theoretical connections of the proposed losses with other discrepancies. The proposed approach is empirically validated in Section \ref{sec:experiments}. We highlight recent work on autoencoders and flow-based TS generation in Section \ref{sec:related_work}, and an analysis of the nature of the improvements in Section \ref{sec:analysis}, along with the conclusions and future work in Sec \ref{sec:conclusion}.

\begin{figure}[]
  \centering
  \includegraphics[width=0.9\linewidth]{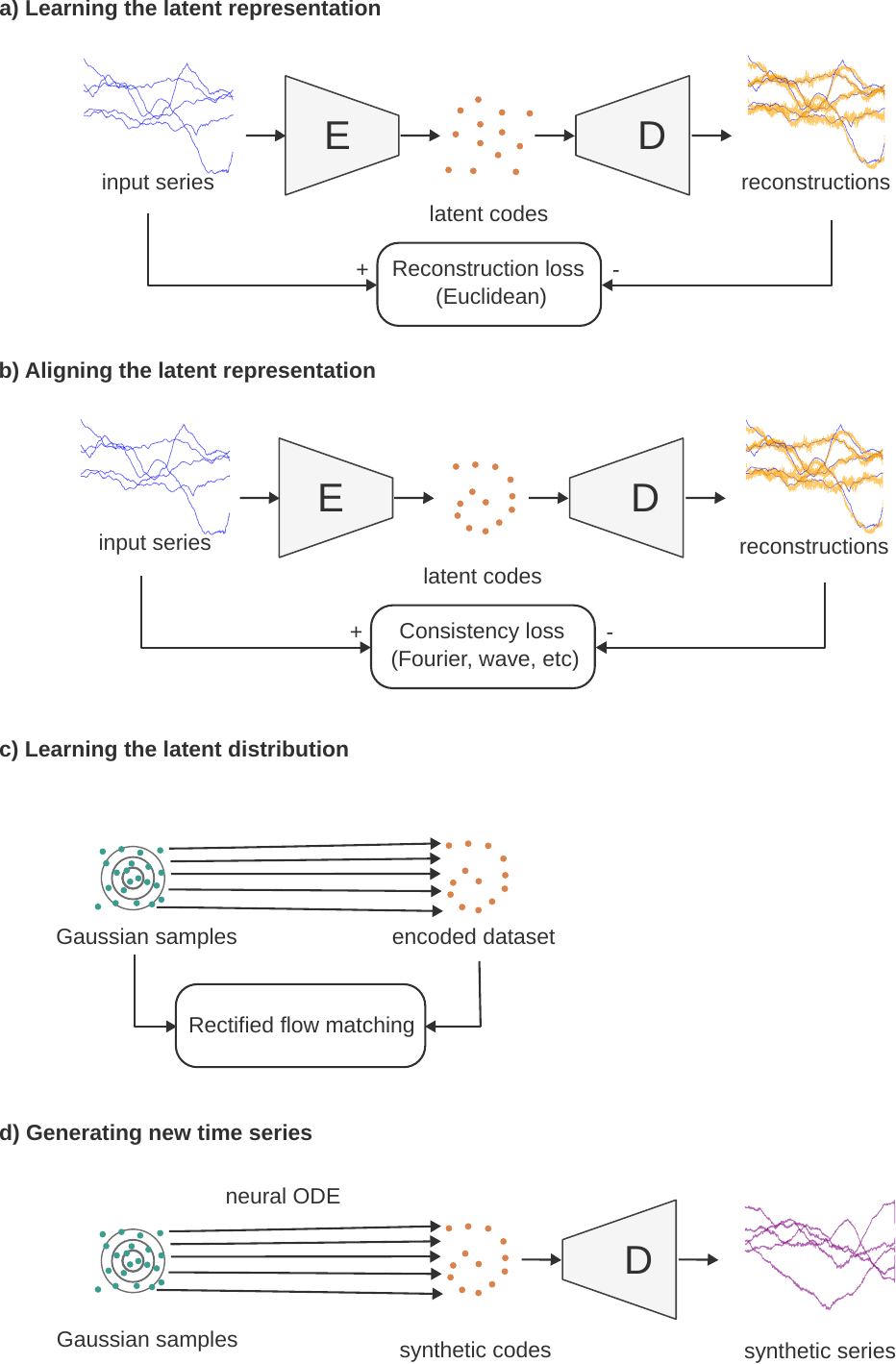}
  \caption{\textbf{Latent flow matching for time-series generation.}
    We a) train an encoder-decoder pair $(\mathcal{E},\mathcal{D})$ for reconstruction, then b) fine-tune it with a \emph{transform-consistency} loss.  We then c) train a flow-matching model in the latent space to transport a Gaussian prior to the distribution of encoded latents, and d) sample by drawing from the prior, applying the flow, and decoding to obtain new time-series samples.}
    \label{fig:latent-flow-matching}
\end{figure}

\section{Background}
\label{sec:background}

\subsection{Flow models}
\label{sec:FM}

Given datapoints $\{x^{(i)}\}_{i=1}^N\subset\mathbb{R}^d$, flow matching seeks to sample from the data distribution $p_{\text{data}}$ by defining a transport from a source (e.g., Gaussian) distribution $p_0$ to the target distribution $p_1 \approx p_{\text{data}}$. In this context, a collection of intermediate probability densities $\{p_t\}_{t\in[0,1]}$ is defined by applying the pushforward operator\footnote{equivalently, $\phi_t(x)\sim p_t$ for $x \sim p_0$.} $p_t = [\phi_t]_\# p_0$. The map $\phi_t(\cdot): [0,1]\times \mathbb{R}^d \to \mathbb{R}^d$ is defined as the solution to the ordinary differential equation (ODE) \citep{lipman_flow_2022}:
\begin{equation}
\label{eq:ODE_FM}
	\frac{d}{dt} \phi_t (x) = u_t(\phi_t(x)); \quad \phi_0(x) = x,
\end{equation}
where $u_t:\mathbb{R}^d \to \mathbb{R}^d$, $t\in[0,1]$ is a time-dependent vector field. The corresponding continuous normalising flow, or simply ``flow'', $\phi_t$ returns the position of $x$ at time $t$ when following the vector field $u_t(\cdot)$. Furthermore, $\phi_t$ is said to be constructed by the velocity field $u_t$ when eq.~\eqref{eq:ODE_FM} is satisfied.
In practice, a conditional version of $u_t$, expressed in terms of the samples from source and target distributions, is approximated for general $x_t$ with a neural network $u_\theta(x_t,t)$ \citep{lipman_flow_2022,tong_improving_2023} via regression.  
The model $u_\theta$ is required to be continuously differentiable with bounded derivatives, a condition that can be easily satisfied when using neural networks \citep{holderrieth_introduction_2025} with activation functions such as ReLU. This ensures the existence and uniqueness of a flow $\phi_t$ solving eq.~\eqref{eq:ODE_FM} \citep{perko_differential_2013,coddington_theory_1956}.
After training, sampling is achieved by drawing $x_0\sim p_0$ and then solving the ODE in eq.~\eqref{eq:ODE_FM} with $x_0$ as the initial value. In practice, the flow model is implemented such that the intermediate probability density follows a Gaussian transition, i.e., 
\[ p_t(x\mid x_0, x_1) = \mathcal{N}(x;\mu(x_0, x_1,t), \sigma(x_0, x_1,t)^2 I)\,, \]
where the mean is the linear interpolation between source and target (data) samples $x_0$ and $x_1$, respectively \citep{tong_improving_2023}. The resulting conditional velocity field reduces to  $ x_1-x_0$, as a consequence of this choice of mean. Subsequently, the velocity field model $u_\theta$ is trained to match this straight line in the ambient space. Hence, we obtain the following conditional flow-matching objective:
\[ \mathcal{L} = \mathbb{E}_{t\sim \mathcal{U}([0,1]), x_1\sim p_1, x_0\sim p_0,x_t\sim p_t(\cdot|x_1,x_0)} \| u_\theta(x_t,t) - (x_1 - x_0) \|^2 \,. \]
The limit case where $\sigma \equiv 0$, namely rectified flow \citep{liu_flow_2022}, has been widely adopted in practice.

\begin{rmk}
A flow given by a straight trajectory in the Euclidean space would not necessarily be a straight transport with respect to the signal in other representations. Therefore, we propose embedding a latent space with dynamics modelled by suitable TS transforms, allowing for a flow that would be trained by linearly transporting the source distribution to a (latent) target one that does reflect these signal representations.
\end{rmk}

\subsection{Latent spaces for scaling flow models}
\label{sec:latent_spaces}

Flow models have benefited from training and sampling in a lower-dimensional latent space, notably leading to faster sampling than models defined directly in the ambient space \citep{rombach_high-resolution_2022}. The learning strategy is separated into two stages: (1) learning the latent space by training an encoder-decoder structure (also referred to as the autoencoder, AE), and (2) defining the flow model in this lower-dimensional space. The compression model introduces latent variables $z \in \mathbb{R}^l$, $l \ll d$, via an encoder $\mathcal{E} : \mathbb{R}^d \rightarrow \mathbb{R}^l$ and a decoder $\mathcal{D} : \mathbb{R}^l \rightarrow \mathbb{R}^d$. These are trained by minimising a reconstruction error $\mathcal{L}_{\mathrm{recon}} = -\mathbb{E}_{z \sim q_\phi(z|x)} [\log p_\theta(x|z)]$, as well as a weighted adversarial term. The latter corresponds to the loss of a discriminator that is trained to distinguish between synthetic and real inputs \citep{goodfellow_generative_2014}, and it has been adopted in the context of image synthesis \citep{esser_taming_2021,yu_vectorquantized_2022}.
In this setting, which is similar to adversarial autoencoders \citep{makhzani_adversarial_2015}, the adversarial loss effectively acts as regularisation \citep{lamb_discriminative_2016}. This has proven to be successful in regularising latent spaces for transformer-based probabilistic forecasting \citep{zhang_latent_2022}. 
This way, denoting $\mathcal{L}_{\mathrm{Adv}}= \log D(x) + \log(1-D(\bar{x}))$, where $D$ is the discriminator network, and a weighting factor $0 < \lambda < 1$, the loss can be written as follows:
\begin{equation}
\label{eq:VAE_loss}
\mathcal{L}_{\mathrm{AE}} = \mathcal{L}_{\mathrm{recon}} + \lambda \mathcal{L}_{\mathrm{Adv}}.
\end{equation}

\subsection{Time series representations}
\label{sec:background_ts}
 
A single time series will be represented by $x\in \mathbb{R}^{m \times d}$, that is, $m$ (uniform in time) realisations of an $\mathbb{R}^d$-valued stochastic process.
The $(m \times d)$-dimensional vector will be referred to as the time representation of the time series (TS), also called signal or path.

\paragraph{Fourier transform.}

For a $1$-dimensional series, the discrete Fourier transform is defined as $\mathcal{F}:\mathbb{R}^{d \times m}\to\mathbb{C}^{d \times m}$,
\begin{equation}
\label{eq:DFT}
    \mathcal{F}[x](\xi) = \frac{1}{m} \sum^{m-1}_{t=0} x_t e^{-\xi 2\pi it} \,,
\end{equation}
where $i$ is the imaginary unit.
Key properties of the Fourier transform  $\mathcal{F}$ are linearity and symmetry. The latter is exploited by the Fast Fourier Transform algorithm (FFT) \citep{cooley_algorithm_1965}, which is the most widely used method to compute discrete frequency samples from the time representation of a TS.

\paragraph{The wavelet transform.}

Decompositions of TS based on Fourier coefficients are able to capture precise frequency values, but present low sensitivity in the time domain. This is called \textit{frequency localisation}. The converse is called \textit{time localisation}, and the trade-off between both is known as the uncertainty principle \citep{gabor_theory_1946}. The wavelet decomposition \citep{meyer_wavelets_1992} is a method that presents moderately accurate localisation in both time and frequency domains. In this setting, a reconstruction associated with the discrete wavelet transform (DWT) at decomposition level $K$ represents a time series as:
\[ x(t) = A_K + \sum^{K}_{j=1} D_j = \sum_k cA_K[k] \phi_{K,k}(t) + \sum_{j,k} cD_j[k] \psi_{j,k}(t)\,, \]
where $\psi_{j,k} = 2^{j/2} \psi(2^j t-k)$ are derived from a mother wavelet $\psi(t)$ by shifting and scaling (indices $k$ and $j$ respectively), and likewise for $\phi_{K,k}$. The term $A_K$ corresponds to an approximation coefficient, i.e., modelling coarse aspects of the TS, equivalent to applying a low-pass filter. On the other hand, $\sum_{j=1}^K D_j$ is a sum of detail coefficients, computed recursively by applying a band-pass filter over many levels with downsampling \citep{daubechies_orthonormal_1998}. The coefficients $A_K$ and $\{D_j\}_1^{K-1}$ are calculated via the dot product between $x$ and the wavelets $\psi$ and $\phi_{j}$, respectively. The resulting orthonormal series is called the wavelet transform, and it fully determines the time series $x$. Crucially, the magnitude of wavelet coefficients determines the regularity of the underlying path, similarly to Fourier coefficients \citep{mallat_wavelet_1999}. We refer to Appendix \ref{sec:complements_TS} for a brief overview of wavelet families.

\paragraph{Signature Transform (ST).}
Signatures are representations of paths, whose motivation comes from the method of Picard iterations to obtain solutions to controlled differential equations \citep{cass_lecture_2024}. For paths defined over $[0,1]$ with bounded $p$-variation (see Appendix \ref{sec:complements_TS} for a more formal presentation), the $k$-fold iterated integral $S(x)^{(k)}$ is denoted by
\[ S(x)^{(k)} = \int_{0 < t_1 < t_2 < \ldots < t_k < 1} dx_{t_1} \otimes dx_{t_2} \otimes \ldots \otimes dx_{t_k} \,. \]
The signature transform is then the collection $ S(x) = \left(1, S(x)^{(1)}, \ldots, S(x)^{(k)}, \ldots \right)$. The signature can be seen as a basis for functions over curve spaces, rather than being a linear decomposition in a pre-existing basis, which is the case for the Fourier and wavelet transforms \citep{kidger_deep_2019}.

\section{Latent flow matching for TS generation}
\label{sec:base_model}

Recall that our goal is to generate high-quality time series that avoid the spectral bias, while maintaining the computational advantages of latent generative models. To this end, we propose a four-stage procedure, outlined in Figure~\ref{fig:latent-flow-matching} and presented as follows.

\textbf{I. Encoder-decoder training}:
An encoder-decoder pair $(\mathcal{E},\mathcal{D})$ is defined according to Section~\ref{sec:latent_spaces}. This compression model without fine-tuning will be referred to as the \textbf{base model}.

\textbf{II. Alignment of the latent space}:
The autoencoder is then fine-tuned by incorporating an additional loss, namely the transform-consistency loss, thereby improving the smoothness by allocating more computational resources to higher-order features. We refer to this process as the \textbf{alignment}. The strategy we adopt for this step is presented in Section \ref{sec:methodology}.
    
\textbf{III. Latent flow matching}: A  (rectified) flow model is trained as outlined in Section \ref{sec:FM}, approximating the constant velocity $z_1-z_0$ from the interpolation $z_t = t z_1 + (1-t) z_0$, where $z_1$ and $z_0$ denote an encoded data point and a latent prior sample. That is, $u_\theta(z_t,t) \approx z_1 - z_0$, where $z_0$ is the realisation of an isotropic Gaussian. The target $z_1$ will correspond to signals represented via our aligned latent space.

\textbf{IV. Sampling}:
Samples are generated by the ODE in Equation \ref{eq:ODE_FM} by replacing $u_t$ with the trained network while using an isotropic Gaussian realisation $z_0$ as the initial value. This yields latent samples that are close to the encoded data. Synthetically generated time series are the result of decoding these back to the ambient space.

\section{Transform-consistency losses for spectral alignment}
\label{sec:methodology}

In addition to their computational advantage, the flexibility of latent spaces presents the opportunity to incorporate geometries beyond the Euclidean distance in its definition. Therefore, by employing time series transforms that enable us to prioritise high frequencies in the loss computation, we propose a means of counteracting the spectral bias exhibited by Euclidean-only latent spaces. To this end, we define a transform-consistency loss in its general form in Subsection~\ref{sec:general_loss}, followed by specific variants based on the Fourier, Wavelet and Signature transforms (Subsections~\ref{sec:fourier_loss},~\ref{sec:wavelet_loss} and~\ref{sec:signature_loss} respectively). These losses will be used to align (i.e., fine-tune) the base latent space for flow matching. Crucially, the definition of the transform-consistency loss will allow us to prioritise higher-order features, which is key to compensating for the spectral bias \cite{rahaman_spectral_2019}, which postulates that low frequencies are learnt first in neural network training. This priorisation of higher levels will be done via a Sobolev-inspired weighting scheme $w_k$, defined in Section~\ref{sec:norm_equivalences}.

\subsection{General transform-consistency loss}
\label{sec:general_loss}

Let $T : \mathbb{R}^{d \times m} \rightarrow \mathbb{R}^L$ be a time series transform, i.e., a mapping as those introduced in Section \ref{sec:background_ts}. We will consider transforms allowing us to model various levels of detail from the TS. Consequently, we assume that $T$ can be decomposed as 
\[
\begin{aligned}
T_1 &: \mathbb{R}^{d \times m} \rightarrow \mathbb{R}^{l_1} \\
&\vdots \\
T_k &: \mathbb{R}^{d \times m} \rightarrow \mathbb{R}^{l_k}
\end{aligned}
\]
with $L = l_1 + \dots + l_k$. We can then consider a weighted loss of the form:
\[ \mathcal{L} = \sum_{k=1}^K w_k^2\| T_k(x) - T_k(\bar{x}) \|^2 \,, \]
for some appropriate norm $\| \cdot \|$ and weighting scheme $w_k,\,k=1,...,K$. Here $\bar{x} = \mathcal{D}(\mathcal{E}(x))$ is the reconstruction of $x$ through the autoencoder. This general, transform-dependent evaluation will be referred to as the transform-consistency loss, since it assesses how close a reconstruction is to the original signal according to features computed from time series representations beyond the waveform, such as higher frequencies or decomposition levels.

The structure of the transforms considered in this paper is such that higher levels $k$ represent finer levels of detail. Consequently, we adopt a weighting scheme that will increase the weight as $k$ increases, hence prioritising local structure over global structure when adjusting the latent space. We refer to Section \ref{sec:sobolev} to present the scheme and its related theoretical notions, and to Section \ref{sec:high_freq_role} for an empirical view on the effect of upweighting higher levels of detail.

\subsection{Fourier-consistency loss}
\label{sec:fourier_loss}

Let $\mathcal{F}$ be the discrete Fourier transform, as defined in \eqref{eq:DFT}, we can define the \textit{Fourier-consistency loss} as
\begin{equation}
\label{eq:fourier_loss}
    \mathcal{L}_{\mathcal{F}} = \sum_{k=1}^{K} \sum_{\xi \in B_k} w_k^2 \left| F_\xi(x)-F_\xi(\bar x) \right|^2
\end{equation}
with $|\cdot|^2$ the square complex modulus and $B_k$, $k=1,\dots,K$ denotes a partition of the frequency grid.

\subsection{Wavelet-consistency loss}
\label{sec:wavelet_loss}

Let $A(x)$ denote the linear projection arising from projecting $x$ into the basis function $\phi$, as defined in Section \ref{sec:background_ts}. Similarly, let $D_k(x)$ be given by projecting $x$ onto $\psi_k$, for $ k=1,\dots,K$. Then weighted wavelet-consistency loss may be expressed as:

\begin{align*}
    \mathcal{L}_W(x, \bar{x}) = & w_{K+1}^2 \left\|A(x) - A(\bar{x}) \right\|^2 \\
    & + \sum_{k=1}^{K} w_k^2 \left\| D_k(x) - D_k(\bar{x}) \right\|^2 \,.
\end{align*}

We considered the Daubechies orthogonal family with $4$ vanishing moments and five levels of decomposition as a wavelet basis (see Appendix \ref{sec:complements_TS}), as implemented by \emph{PyWavelets} \citep{lee_pywavelets_2019}.

\subsection{Signature-consistency loss}
\label{sec:signature_loss}

Let $S^{(\leq k)}(x)$ be the signature transform of $x$ up to level $k$. This representation of a path is composed of a collection of iterated integrals on several levels, as defined in Section \ref{sec:background_ts}. We define the weighted signature-consistency loss as:

\[ \mathcal{L}_S = \sum_{k=1}^K w_k^2 \left\| S^{(\leq k)}(x) - S^{(\leq k)}(\bar{x}) \right\|^2 \,. \]

Notice that the injectivity of the full (untruncated) signature only holds when one of the axes is a monotone path shared between paths \citep{cass_lecture_2024}, and when all paths share a common departing point. Hence, a common practice is to augment the time series with the time coordinate and to append a starting point to every sequence. To apply this, we consider series to be of dimension $d+1$ by simply adding the timestamps $0 \leq t_i \leq 1,\ i = 1, \dots, m$ as the first coordinate, but only to one-dimensional series (for which the signature would not be defined otherwise). We do not append a starting point, meaning that $S^{(\leq k)}$ will be invariant to path translation \citep{kidger_signatory_2020}.

\section{Weighting scheme and theoretical justification of the consistency losses}
\label{sec:norm_equivalences}
\subsection{Weighting scheme definition}
The structure of the transforms presented in Section \ref{sec:methodology} allows us to focus the computational resources on features that are less emphasised by the Euclidean reconstruction error of vanilla autoencoders. To this end, we will consider weighting schemes to increase the importance of higher-level features, while allowing us to draw theoretical connections between our losses and existing norms in the literature.
 
Indeed, we define the weighting scheme for the Fourier loss as
\[ w_k=\left(1+|\xi_k|^2\right)^s \,. \]
This scheme will be referred to as the Sobolev weighting scheme, inspired by the connection with Sobolev norms, which we outline in Section~\ref{sec:sobolev}.
For the Signature and Wavelet transforms, where the levels consist of several components, the weighting scheme is set to: 
\[ w_k = \left(1+\left(\frac{k}{k_\text{max}}\right)^2\right)^s \,,\]
with $w_k$ denoting the level-indexed counterpart of the frequency bin-indexed $w_k$. This choice emulates the effect of the frequency weighting $w_k=(1+|\xi_k|^2)^s$ as it augments the weights as the level of detail increases, without fully neglecting low levels. Dividing by $k_\text{max}$, i.e., the maximum level considered in practice, ensures that the scale remains similar with different numbers of levels. The effect of varying the exponent $s$ is included in Appendix \ref{sec:sob_weighting}, with $s=1$ being used throughout the experiments.

In the rest of this section, we will establish the aforementioned theoretical connections for the Fourier and Signature consistency losses (Sections \ref{sec:sobolev} and \ref{sec:sig_kernel} respectively).

\subsection{Links to Sobolev norms}
\label{sec:sobolev}

Remarkably, our transform-consistency losses amount to constraining the functional class of the generated samples. Indeed, our alignment pushes the underlying path of samples to lie in the same Sobolev space as the data, hence forcing them to share the same degree of regularity. To formalise this, we present the discrete form of the Sobolev norm, which can be written in terms of the Fourier transform:
\[ \| x \|_{H^s}^2 := \sum_{k=1}^{M-1} (1 + |\xi_k|^2)^s \, \| \mathcal{F}[x](k) \|^2 , \]
where $\xi_k$ corresponds to the $k^\text{th}$ frequency. 
\begin{proposition}
\label{prop:sobolev_equivalence}
    Let $\mathcal{L}^2_{\mathcal{F},s}$ denote the transform-consistency loss in Equation \ref{eq:fourier_loss}, with disjoint frequency bins $B_k$. Denote $m_k:=\inf_{\xi\in B_k} (1+|\xi|^2)^{s/2}$ and $M_k:=\sup_{\xi\in B_k}(1+|\xi|^2)^{s/2}$. Then, for any weighting scheme $\{w_k\}_1^K$ such that $m_k \leq w_k \leq M_k$, $\| x-\bar{x} \|_{H^s}^2$ is equivalent to $\mathcal{L}^2_{\mathcal{F},s}(x,\bar{x})$.
\end{proposition}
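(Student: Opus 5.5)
We prove norm equivalence with explicit constants: there exist $c, C > 0$, depending only on the bins $B_k$ and on $s$, with
\[
c\,\mathcal{L}^2_{\mathcal{F},s}(x,\bar x) \;\le\; \| x-\bar{x} \|_{H^s}^2 \;\le\; C\,\mathcal{L}^2_{\mathcal{F},s}(x,\bar x)
\]
for all $x, \bar x$. Everything hinges on one observation. Both quantities are weighted sums of the same nonnegative terms $|\mathcal{F}[x-\bar x](\xi)|^2$, and by linearity of $\mathcal{F}$ these satisfy $|F_\xi(x)-F_\xi(\bar x)|^2 = |\mathcal{F}[x-\bar x](\xi)|^2$. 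So the comparison reduces to comparing weights frequency by frequency.

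\textbf{Regrouping.} Write $e=x-\bar x$. Since the bins $B_k$ are disjoint and cover the frequency grid (they form a partition, as in Section~\ref{sec:fourier_loss}), the Sobolev sum regroups as
\[
\| e\|_{H^s}^2=\sum_{k=1}^K\sum_{\xi\in B_k}(1+|\xi|^2)^s\,|\mathcal{F}[e](\xi)|^2 .
\]
If the zero frequency is excluded from the $H^s$ sum but lies in some bin, I will either restrict the partition to the same index set or note that both sides treat it identically. I will flag this as a convention to align rather than a mathematical difficulty.

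\textbf{Pointwise weight comparison.} For $\xi\in B_k$, the definitions of $m_k$ and $M_k$ give $m_k^2\le (1+|\xi|^2)^s\le M_k^2$. The hypothesis gives $m_k^2\le w_k^2\le M_k^2$. Therefore
\[
\frac{m_k^2}{M_k^2}\,w_k^2\;\le\;(1+|\xi|^2)^s\;\le\;\frac{M_k^2}{m_k^2}\,w_k^2 .
\]
Set $\rho:=\max_k M_k/m_k$. This is finite because there are finitely many bins, the grid is finite (so each $M_k<\infty$), and $m_k\ge 1>0$. Multiplying by $|\mathcal{F}[e](\xi)|^2$ and summing over $\xi\in B_k$ and then over $k$ yields
\[
\rho^{-2}\,\mathcal{L}^2_{\mathcal{F},s}\le\|e\|_{H^s}^2\le\rho^{2}\,\mathcal{L}^2_{\mathcal{F},s}.
\]
This is the claimed equivalence with $c=\rho^{-2}$ and $C=\rho^2$.

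\textbf{Main obstacle.} No step is technically hard. The real work is bookkeeping: confirming the partition covers exactly the index set of the $H^s$ sum, and stating the constants honestly. The constants do not depend on $m$ only if the bin widths stay controlled, for instance for dyadic bins with $M_k/m_k$ uniformly bounded. I will remark that for a fixed finite grid equivalence is automatic, and that the meaningful content is the uniform bound on $\rho$. In particular, $\rho=1$ when each bin is a single frequency and $w_k=(1+|\xi_k|^2)^{s/2}$, in which case the two quantities coincide exactly.
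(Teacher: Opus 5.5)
Your proof is correct and follows essentially the same route as the paper's: regroup the $H^s$ sum over the partition $\{B_k\}$, then use $m_k^2\le(1+|\xi|^2)^s\le M_k^2$ on each bin together with linearity of $\mathcal{F}$. The paper stops after sandwiching $\|x-\bar x\|_{H^s}^2$ between the $m_k^2$- and $M_k^2$-weighted sums and simply asserts comparability, so two parts of your argument make explicit what it leaves implicit: your direct comparison with $w_k^2$, which gives the constants $\rho^{\pm 2}$ with $\rho=\max_k M_k/m_k$, and your remark that the bins must cover exactly the index set of the $H^s$ sum (the zero frequency).
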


The proof is provided in Appendix \ref{sec:proof_sobolev}. This holds in particular for separating the frequency bins independently, in which case $w_k:=(1+|\xi_k|^2)$ would yield exactly the Sobolev norm. Using the triangle inequality, it is straightforward to verify the following corollary:

\begin{corollary}
    Minimising $\mathcal{L}_\mathcal{F}(x,\bar{x})$ under the assumptions of Proposition \ref{prop:sobolev_equivalence} reduces the roughness increase that the reconstructions $\bar x$ show with respect to the input $x$, as measured by the $s$-Sobolev norm.
\end{corollary}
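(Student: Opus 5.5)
The corollary asks us to show that making $\mathcal{L}_\mathcal{F}(x,\bar x)$ small forces the roughness of $\bar x$, measured by $\|\bar x\|_{H^s}$, to exceed that of $x$ by at most a controlled amount. The plan has two ingredients: the upper half of the norm equivalence in Proposition~\ref{prop:sobolev_equivalence}, and the reverse triangle inequality for $\|\cdot\|_{H^s}$. Since $\|\cdot\|_{H^s}$ is a weighted $\ell^2$ norm of the DFT coefficients with strictly positive weights, and $\mathcal{F}$ is linear, it is a genuine (semi)norm on $\mathbb{R}^{m\times d}$. Hence the triangle inequality holds:
\[
\|\bar x\|_{H^s} \le \|x\|_{H^s} + \|x-\bar x\|_{H^s}.
\]
This gives $\|\bar x\|_{H^s}-\|x\|_{H^s}\le \|x-\bar x\|_{H^s}$.

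Next, I would bound $\|x-\bar x\|_{H^s}$ by the loss. Fix $\xi\in B_k$. The assumption $w_k\ge m_k$ is stated for the exponent $s/2$, whereas the Sobolev weight carries exponent $s$. So the first step is to check that the weight appearing in the $H^s$ norm at $\xi$ satisfies $(1+|\xi|^2)^s \le M_k^2$. For the same reason $M_k^2 \le (M_k/m_k)^2 w_k^2$, because $w_k\ge m_k$.

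Summing over $\xi\in B_k$ and over $k$, using that the bins are disjoint and cover the frequency grid, I expect
\[
\|x-\bar x\|_{H^s}^2 \le C\,\mathcal{L}_\mathcal{F}(x,\bar x),\qquad C=\max_k (M_k/m_k)^2 .
\]
The constant $C$ is finite because there are finitely many bins and the grid is bounded, so each $M_k/m_k$ is finite and $m_k\ge 1$. This is exactly the upper half of the equivalence in Proposition~\ref{prop:sobolev_equivalence}, which I would simply invoke. Combining it with the triangle inequality gives
\[
\|\bar x\|_{H^s}-\|x\|_{H^s} \le \sqrt{C\,\mathcal{L}_\mathcal{F}(x,\bar x)} .
\]
Therefore decreasing $\mathcal{L}_\mathcal{F}$ decreases the admissible excess roughness of $\bar x$ over $x$. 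By symmetry the same bound holds for $\big|\|\bar x\|_{H^s}-\|x\|_{H^s}\big|$, so any roughness mismatch in either direction is controlled.

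The main obstacle is bookkeeping rather than depth. One issue is the exponent mismatch between the $s/2$ in $m_k, M_k$ and the $s$ in the $H^s$ weight. The other is the index range: the $H^s$ sum starts at $k=1$ and excludes DC, whereas the bins partition the full grid. I would handle both by fixing the convention that $w_k^2$ is compared with $(1+|\xi|^2)^{s}$, and by noting that including the DC term only enlarges the right-hand side. Finally, I would state precisely what "reduces" means: minimising $\mathcal{L}_\mathcal{F}$ minimises an upper bound on the roughness increase, which goes to zero as the loss does. It does not claim the increase decreases monotonically along every optimisation step.
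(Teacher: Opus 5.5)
Your proposal is correct and follows the paper's route: the paper derives the corollary from the norm equivalence of Proposition~\ref{prop:sobolev_equivalence} together with the triangle inequality for $\|\cdot\|_{H^s}$, exactly as in your chain $\|\bar x\|_{H^s}-\|x\|_{H^s}\le\|x-\bar x\|_{H^s}\le\sqrt{C\,\mathcal{L}_\mathcal{F}(x,\bar x)}$. Your explicit constant $C=\max_k (M_k/m_k)^2$ and your remark about the excluded DC term spell out details that the paper's one-line justification leaves implicit.
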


In $L^2$, for a real-valued function $f$ in $L^2$, the Sobolev norm can be written as $\| f \|_{H^s}^2 = \sum_{|\alpha| \leq s} \| D^\alpha f \|^2$, where $D^\alpha f$ are the partial derivatives of order $\alpha$ of $f$. In the context of machine learning, this norm has been used to reinforce training with derivative information of the target values with respect to the input \citep{czarnecki_sobolev_2017}. Here, a higher $s$ entails a stronger suppression of high frequencies in the context of signal processing.

\subsection{Wasserstein bound}
\label{sec:wasserstein}

The fine-tuning procedure makes the latent space reconstructions closer to the dataset according to norms prioritising higher frequencies. We will now see that this effectively reduces an upper bound of the distance between the data distribution and the distribution generated by latent flow matching. Indeed, let
\[ d_T(x_1, x_2) = \left( \sum_{k=1}^K \omega_k^2 \| T_k(x_1) - T_k(x_2) \| \right)^{1/2} \]
for a given representation $T = T_1, \dots, T_K$. Let us denote by $W^{(T)}$ the 2-Wasserstein distance that has $d_T$ as the cost function, that is,
\[ W_2^{(T)}(p_1, p_2) = \left( \min_{\pi \in \Pi(p_1, p_2)} \int_{\mathbb{R}^d \times \mathbb{R}^d} d_T(x_1, x_2)^2 \, d\pi(x_1, x_2) \right). \]

\begin{proposition}
\label{prop:wasserstein_bound}
Let $\tilde{p}_1$ be the distribution induced by latent flow matching, that is, $\tilde{p}_1 = \mathcal{D}_\# p_{\tilde{z}}$ with $p_{\tilde{z}}$ the distribution of latent samples produced by the trained flow in the latent space. Let $p_z = \mathcal{E}_\# p_\text{data}$ be the distribution of encoded data, and assume that both the decoder and the flow models are Lipschitz with constants $L_\mathcal{D}$ and $L_\theta$, respectively.

Then the 2-Wasserstein distance with a $T$-based metric between $p_\text{data}$ and $\tilde{p}_1$ can be bounded by:
\[ W_2^{(T)}(p_\text{data}, \tilde{p}_1) \leq \mathbb{E}_{p_\text{data}}\bigl[\mathcal{L}_T(x, \tilde{x})\bigr]^{1/2} + L_{\mathcal{D},T} \, e^{\frac{1+2L_\theta}{2}} H(u_\theta)^{1/2}, \]
where $L_{\mathcal{D},T} := \sup_{z \neq z'} \frac{d_T(\mathcal{D}(z), \mathcal{D}(z'))}{\| z - z' \|_2}$ is the Lipschitz constant of $d_T$, $\tilde{x} = \mathcal{D}(E(x))$ and $H(u_\theta)$ is the flow matching objective:
\[ H(u_\theta) = \int_0^1 \int_{\mathbb{R}^d} \| u_t(z_t) - u_\theta(z, t) \|^2 \, dq_t(z_t) \, dt \]
given the marginal distribution $q_t$ of latents $z_t$.
\end{proposition}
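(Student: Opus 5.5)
We will use the triangle inequality for $W_2^{(T)}$ with the intermediate distribution $\hat p := \mathcal{D}_\# p_z = (\mathcal{D}\circ\mathcal{E})_\# p_\text{data}$, the law of reconstructions of real data. This gives
\[ W_2^{(T)}(p_\text{data},\tilde p_1) \le W_2^{(T)}(p_\text{data},\hat p) + W_2^{(T)}(\hat p,\tilde p_1). \]
The triangle inequality holds provided $d_T$ is a (pseudo)metric on $\mathbb{R}^{m\times d}$. This is true because $d_T$ is a weighted $\ell^2$-type combination of the seminorms $\|T_k(\cdot)-T_k(\cdot)\|$. We read the inner norm in $d_T$ as squared, so that $d_T^2$ is exactly the integrand of $\mathcal{L}_T$. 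Then the standard gluing-lemma proof of the triangle inequality for Wasserstein distances applies verbatim.

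\textbf{Reconstruction term.} We bound $W_2^{(T)}(p_\text{data},\hat p)$ with the deterministic coupling $\pi = (\mathrm{id},\mathcal{D}\circ\mathcal{E})_\# p_\text{data}$. This coupling gives
\[ W_2^{(T)}(p_\text{data},\hat p)^2 \le \int d_T(x,\tilde x)^2\,dp_\text{data}(x) = \mathbb{E}_{p_\text{data}}[\mathcal{L}_T(x,\tilde x)], \]
which is the first term of the bound after taking square roots.

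\textbf{Latent term.} Both $\hat p$ and $\tilde p_1$ are pushforwards under $\mathcal{D}$. So for any coupling $\gamma$ of $p_z$ and $p_{\tilde z}$, the coupling $(\mathcal{D}\times\mathcal{D})_\#\gamma$ couples $\hat p$ and $\tilde p_1$, and
\[ \int d_T(\mathcal{D}(z),\mathcal{D}(z'))^2\,d\gamma \le L_{\mathcal{D},T}^2\int \|z-z'\|_2^2\,d\gamma. \]
Taking the infimum over $\gamma$ yields $W_2^{(T)}(\hat p,\tilde p_1)\le L_{\mathcal{D},T}\,W_2(p_z,p_{\tilde z})$, with the Euclidean $W_2$ in latent space. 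This step uses $L_{\mathcal{D},T}<\infty$, which follows for instance from Lipschitzness of $\mathcal{D}$ together with Lipschitz transforms $T_k$. Here we simply take finiteness of $L_{\mathcal{D},T}$ as given, as the statement does.

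\textbf{Flow-matching term (main obstacle).} It remains to show
\[ W_2(p_z,p_{\tilde z})\le e^{(1+2L_\theta)/2}H(u_\theta)^{1/2}. \]
This is the known flow-matching Wasserstein bound (Benton et al.\ / Albergo–Vanden-Eijnden type), and we reproduce its argument. Let $\phi_t$ be the true flow of $u_t$ and $\psi_t$ the learned flow of $u_\theta$, both started at the same $z_0\sim p_0$. Then $p_z=[\phi_1]_\#p_0$ and $p_{\tilde z}=[\psi_1]_\#p_0$, so $(\phi_1,\psi_1)_\#p_0$ is a coupling and
\[ W_2^2\le \mathbb{E}\|\phi_1(z_0)-\psi_1(z_0)\|^2. \]
Set $e_t=\mathbb{E}\|\phi_t-\psi_t\|^2$. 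Differentiating gives $\dot e_t = 2\mathbb{E}\langle \phi_t-\psi_t,\,u_t(\phi_t)-u_\theta(\psi_t,t)\rangle$. We split the velocity difference as $u_t(\phi_t)-u_\theta(\phi_t,t)$ plus $u_\theta(\phi_t,t)-u_\theta(\psi_t,t)$. The first part is handled by Young's inequality $2\langle a,b\rangle\le\|a\|^2+\|b\|^2$. The second part is handled by the Lipschitz bound $2L_\theta\|a\|^2$. Since $\phi_t(z_0)\sim q_t$, this gives
\[ \dot e_t\le(1+2L_\theta)e_t+\mathbb{E}_{q_t}\|u_t-u_\theta(\cdot,t)\|^2. \]
With $e_0=0$, Grönwall's inequality yields $e_1\le e^{1+2L_\theta}H(u_\theta)$, and taking square roots gives the claim.

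The delicate points are justifying the differentiation under the expectation, and ensuring $\phi_t$ is well defined, i.e.\ that $u_t$ is regular enough to generate $q_t$. We assume both, as is standard in this setting. Combining the three bounds gives the proposition.
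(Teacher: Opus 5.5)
Your proposal is correct and follows the paper's proof step for step. You use the triangle inequality through the reconstruction law $(\mathcal{D}\circ\mathcal{E})_\# p_\text{data}$, the deterministic coupling $(\mathrm{id},\mathcal{D}\circ\mathcal{E})_\# p_\text{data}$ for the first term, the pushforward coupling giving $L_{\mathcal{D},T}\,W_2(p_z,p_{\tilde z})$, and the Albergo--Vanden-Eijnden flow-matching bound for the last factor. The differences are that you re-derive that bound via Gr\"onwall (correctly) where the paper only cites it, and you state explicitly that $d_T^2=\mathcal{L}_T$ (reading the inner norm as squared), which the paper's first step implicitly needs.
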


The proof can be found in Appendix \ref{sec:wasserstein_proof}.

\begin{remark}
    Our transform-consistency losses decrease the bound on the Wasserstein distance by directly minimising the term $\mathbb{E}_{p_\text{data}}[\mathcal{L}_T(x, \tilde{x})]$. Moreover, the loss is also implicitly acting on the second term by pulling reconstructions and data closer with respect to $d_T$, which arguably decreases the $d_T$-Lipschitz constant $L_{\mathcal{D},T}$ in the support of $p_z$. Consequently, the full $T$-based Wasserstein distance is reduced by the proposed alignment.
\end{remark}

\subsection{The signature kernel}
\label{sec:sig_kernel}
We next show that the Signature-consistency loss approximates the action of a kernel, that is, a similarity function for a pair of data points. Let us consider the signature kernel given by \citep{cass_lecture_2024}
\begin{equation}
\label{eq:sig_kernel}
    k_w([x],[y]) = \langle S(x), S(y) \rangle_w,
\end{equation}
and refer to Appendix \ref{sec:unparameterised_paths} for further details of this kernel. A notable property is that $k_w$ allows us to define a reproducible kernel Hilbert space (RKHS, \cite{aronszajn_theory_1950}), which we briefly introduce in Appendix \ref{sec:RKHS}. We formalise this connection in Proposition \ref{prop:sigkernel} below.

\begin{proposition}
\label{prop:sigkernel}
Let $s \in \mathbb{R}_+$ and set $ w_k = (1+(\frac{k}{k_\text{max}})^2)^s $ for $k_\text{max}\in\mathbb{N}_+$. Then, the $w$-signature kernel given by
\[ k_w(x,y) = \sum_{k=0}^{\infty} w_k\, \langle S(x)^{(k)}, S(y)^{(k)} \rangle \]
defines a unique RKHS $\mathcal{H}_w$ satisfying the reproducing property via $k_w$. In particular, minimising the infinite transform-consistency loss with weighting function $w_k$ defined as
\[ \mathcal{L}_S^\infty = \sum_{k=1}^{\infty} w_k^2 \, \| S^{(k)}(x) - S^{(k)}(\bar{x}) \|^2 \]
amounts to minimising $\|x - \bar{x}\|_{\mathcal{H}_w}^2$, where $\|x\|_{\mathcal{H}_w} = \sqrt{k_w(x,x)}$.
\end{proposition}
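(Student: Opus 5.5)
The plan has three parts: show that $k_w$ is a positive definite kernel, invoke Moore–Aronszajn for the RKHS, and then identify the RKHS distance with the weighted signature discrepancy.

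\textbf{Step 1: positive definiteness.} I would write $k_w$ as an inner product in a weighted tensor-algebra Hilbert space. Let $\mathcal{T}_w$ be the space of sequences $a=(a^{(0)},a^{(1)},\dots)$ with $a^{(k)}\in(\mathbb{R}^{d})^{\otimes k}$ and $\sum_k w_k\|a^{(k)}\|^2<\infty$, equipped with $\langle a,b\rangle_w=\sum_k w_k\langle a^{(k)},b^{(k)}\rangle$. This is a Hilbert space: it is a weighted $\ell^2$-direct sum of finite-dimensional Hilbert spaces.

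Next I would check that $S(x)\in\mathcal{T}_w$. For bounded-variation paths, the factorial decay $\|S(x)^{(k)}\|\le \|x\|_{1\text{-var}}^k/k!$ applies. Since $w_k\le (1+k^2/k_\text{max}^2)^s$ grows only polynomially, the series $\sum_k w_k\|S^{(k)}(x)\|^2$ converges. For $p$-variation one would instead use the neo-classical inequality bound $C^k\omega^{k/p}/(k/p)!$, which also decays super-exponentially. Hence $k_w(x,y)=\langle S(x),S(y)\rangle_w$ is well defined, symmetric, and positive semidefinite, because $\sum_{i,j}c_ic_jk_w(x_i,x_j)=\|\sum_i c_iS(x_i)\|_w^2\ge 0$.

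\textbf{Step 2: the RKHS.} Moore–Aronszajn (Appendix~\ref{sec:RKHS}) then gives a unique RKHS $\mathcal{H}_w$ with reproducing kernel $k_w$. Concretely, $\mathcal{H}_w=\{f_a=\langle a,S(\cdot)\rangle_w : a\in\mathcal{T}_w\}$, with norm equal to the quotient norm. The canonical feature map is $x\mapsto k_w(x,\cdot)$, and $\|k_w(x,\cdot)\|_{\mathcal{H}_w}^2=k_w(x,x)$. Uniqueness up to tree-like equivalence is the reason the statement uses $[x]$; on time-augmented paths, as in Section~\ref{sec:signature_loss}, $S$ is injective, so points can be identified with their classes.

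\textbf{Step 3: the identity with the loss.} Reading $\|x-\bar x\|_{\mathcal{H}_w}$ as the distance between embeddings, I would expand
\[
\|k_w(x,\cdot)-k_w(\bar x,\cdot)\|_{\mathcal{H}_w}^2=k_w(x,x)-2k_w(x,\bar x)+k_w(\bar x,\bar x).
\]
By the reproducing property this equals $\|S(x)-S(\bar x)\|_w^2=\sum_{k\ge0}w_k\|S^{(k)}(x)-S^{(k)}(\bar x)\|^2$. The $k=0$ term vanishes, since both level-zero components equal $1$.

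\textbf{Main obstacle: matching the weights.} The loss $\mathcal{L}_S^\infty$ carries weights $w_k^2$, whereas the kernel carries $w_k$. I would resolve this by interpreting the result as "amounts to minimising" in one of two ways:
\begin{itemize}
\item[(i)] apply Steps 1--3 to the kernel $k_{w^2}$; note that $w_k^2=(1+(k/k_\text{max})^2)^{2s}$ is of the same form with $s$ replaced by $2s$, so the equality is exact after this reparametrisation;
\item[(ii)] keep $k_w$ and use $1\le w_k\le w_k^2$, which gives $\|x-\bar x\|_{\mathcal{H}_w}^2\le\mathcal{L}_S^\infty$; minimising the loss then drives the RKHS distance to zero, and for the reverse direction one only needs the summable tail control from Step 1.
\end{itemize}
I would state (i) as the precise claim and (ii) as a remark. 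Justifying convergence for rough paths in Step 1 is the only other technical point, and I would cite the factorial decay estimate from \citep{cass_lecture_2024}.
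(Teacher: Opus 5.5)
Your proof is correct. For the existence of $\mathcal{H}_w$ it follows the same skeleton as the paper. The paper verifies $\sum_k C^k w_k/(k!)^2<\infty$ by a ratio test, cites Lemma~2.1.10 of \cite{cass_lecture_2024} to get that $k_w$ is a well-defined positive semi-definite kernel, and then applies Moore--Aronszajn. Your Step~1 unpacks that cited lemma: the factorial decay bound gives exactly this summability with $C=\|x\|_{1\text{-var}}^2$, and the feature map into the weighted tensor algebra gives positive semi-definiteness. This makes your argument self-contained. It also sidesteps the paper's ratio-test computation, which has small slips (e.g.\ $(k+1)!$ in place of $((k+1)!)^2$), although its conclusion is right.

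The genuine difference is the ``in particular'' clause. The paper's proof stops after Moore--Aronszajn and never connects $\mathcal{L}_S^\infty$ to the RKHS norm. Your Step~3 does make that connection, and both of your interpretive moves there are necessary rather than cosmetic.

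First, reading $\|x-\bar x\|_{\mathcal{H}_w}$ as $\|k_w(x,\cdot)-k_w(\bar x,\cdot)\|_{\mathcal{H}_w}$ is forced. The literal $\sqrt{k_w(x-\bar x,x-\bar x)}=\|S(x-\bar x)\|_w$ involves the signature of the difference path, which is not $S(x)-S(\bar x)$. It is also always at least $\sqrt{w_0}=1$, so it could never be driven to zero.

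Second, the $w_k$ versus $w_k^2$ mismatch is real. The exact identity lives in $\mathcal{H}_{w^2}$ (exponent $2s$), so your version~(i) is the correct precise form of the claim.

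In~(ii), note that no constant $C$ gives $\mathcal{L}_S^\infty\le C\,\|x-\bar x\|_{\mathcal{H}_w}^2$, because $w_k^2/w_k=w_k\to\infty$. The reverse direction therefore holds only qualitatively. The zero sets are identical, and convergence to zero transfers on families of uniformly bounded $1$-variation. To see this, bound the levels $k\le K$ by $\max_{k\le K}w_k$ times the RKHS distance, and bound the tail uniformly via factorial decay. That is how your ``summable tail control'' should be stated.
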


The proof of Proposition \ref{prop:sigkernel} is provided in Appendix \ref{sec:sigkernel_proof}. The kernel associated with the signature loss has the property of uniquely characterising the underlying probability measure of paths. This characterisation holds via the kernel mean embedding $M(\mu) := \mathbb{E}_{x\sim\mu}[k_\theta(x,\cdot)]$, for $\mu$ in the Borel set $\mathcal{P}(K)$ of probability measures on a compact set $K$. In the setup of Proposition~\ref{prop:sigkernel}, $M$ is injective on $\mathcal{P}(K)$, hence distinguishing between measures on $\mathcal{P}(K)$. We refer to \cite{cass_lecture_2024} and \cite{simon-gabriel_kernel_2018} for a formal statement and proof. On the other hand, Lemma 2.3.1 from \cite{cass_lecture_2024} ensures that the truncated signature transform-consistency loss that we use in practice converges to $\mathcal{L}_S^\infty$ as the number of levels $K$ grows to infinity.

\section{Experiments}
\label{sec:experiments}

\subsection{Quantitative evaluation}
\paragraph{Architecture.} We begin by training a base model (unaligned) as specified in Section \ref{sec:base_model}. The AE consists of an encoder-decoder pair, where each consists of two convolutional layers with a stride equal to $2$, after which a linear layer projects the output to the final latent representation. The flow model regressing the velocity vector field is composed of a multilayer perceptron-based U-NET \citep{ronneberger_u-net_2015} with depth $2$. Specific dimensionalities of the network are specified in Table~\ref{tab:model_details}.

\paragraph{AE alignment.} We then align the encoder-decoder pair by fine-tuning the base model for $25$ epochs. This separate optimisation stage corresponds to minimising the transform-consistency losses as an additional term to the reconstruction and adversarial losses introduced in Section \ref{sec:latent_spaces}. This joint optimisation yielded more stable training than only minimising the transform-consistency loss. The weighting schemes are set to Sobolev for all three variants, as specified in Section \ref{sec:sobolev}, with $s=1$. Other details, such as training times and dimensionality of the transforms, are included in Appendix \ref{sec:exp_details}.

\paragraph{Flow model training and sampling.} The flow model was trained for $500$ epochs for multivariate series and $250$ for single-channel ones, after which the models showed no improvement. A new flow is trained for each new aligned latent space, although the fact that the geometry of the latent space does not change drastically suggests that fine-tuning an existing flow is probably a positive prospect. We emphasise that our alignment framework is intended to be carried out before training the flow; the fine-tuning of the autoencoder is the only extra computational overhead. Sampling was carried out using a $10$-step Euler method \citep{hairer_solving_1993}.

\paragraph{Datasets.} This study focuses on long-range signals. To this end, we considered the datasets Weather \citep{kolle_documentation_2024}, Exchange rates \citep{lai_modeling_2018} and HEPC \citep{uci_machine_learning_repository_household_2024}. These datasets are comprised of multiple channels (1, 8 and 14 for HEPC, exchange rates and weather, respectively), and they reflect different real-world scenarios. Consequently, their frequency structure is different, which highlights the advantages of a methodology that aligns representations to the underlying dataset.  

\paragraph{Metrics and baselines.} We followed the evaluation protocol used in \cite{yoon_time-series_2019} and \cite{barancikova_sigdiffusions_2024}, adapted for unconditional time series generation. The discriminative score reflects how well synthetic data can fool a classifier, measured via the out-of-sample accuracy of a recurrent neural network (RNN). The predictive score is the loss of an RNN trained on real data to forecast the next point, evaluated on generated samples.
We also applied a Kolmogorov–Smirnov (KS) test to compare the empirical distributions of real test series with those of generated data. We report the KS statistic and the percentage of points where the null hypothesis (equal distributions) is rejected at a 5\% significance level, applied to the distribution of values of the time series evaluated at a given time stamp ($t=300,500,700$ and $900$). For readability, the discriminative score is shifted by subtracting 0.5, so all metrics improve as they move closer to 0. Our models are compared with four diffusion-based baselines: SigDiffusion \citep{barancikova_sigdiffusions_2024}, diffusion-TS \citep{yuan_diffusion-ts_2023}, CSPD-GP \citep{bilos_modeling_2023}, and DDO \citep{lim_regular_2023}. 

\paragraph{Results.}
Table \ref{tab:unconditional_eval} reports the performance of our proposed models and baselines. The base model and the transform-aligned variants show clear gains over these state-of-the-art methods, achieving stronger discriminative and predictive results. The regularised versions usually improve on the base model, especially on the discriminative score. We include a discussion of why this is the case in Section \ref{sec:high_freq_role}. Furthermore, our proposed method generates $1000$ samples in under one second on an RTX3090 GPU, which represents a significant improvement over the considered benchmarks: between 22 and 97 seconds for SigDiffusion and more than one minute for the rest using the same hardware. Training times are also included in Appendix \ref{sec:exp_details}. Fine-tuning with the signature-consistency loss incurs longer training times, which is discussed in Appendix \ref{sec:signature_limitations}.

\begin{table*}[!ht]
\caption{Discriminative, predictive and KS test metrics (KS score and rejection percentage in brackets). Less is better for all metrics. The results are computed for 1000 synthetically generated time series of length 1000, compared to 1000 unseen series from the original datasets. The KS tests are computed over the temporal indices $t=300,500,700,900$.}
\centering
\scriptsize{
\label{tab:unconditional_eval}
\hspace*{-1.5cm}
\begin{tabular}{@{}llllllll@{}}
\toprule
dataset & Model & \begin{tabular}[c]{@{}l@{}}Discriminative\\ score\end{tabular} & \begin{tabular}[c]{@{}l@{}}Predictive\\ score\end{tabular} & \begin{tabular}[c]{@{}l@{}}KS\\ t=300\end{tabular} & \begin{tabular}[c]{@{}l@{}}KS\\ t=500\end{tabular} & \begin{tabular}[c]{@{}l@{}}KS\\ t=700\end{tabular} & \begin{tabular}[c]{@{}l@{}}KS\\ t=900\end{tabular} \\ \midrule
 & Base + Fourier cons. (ours) & \textbf{0.018±.012} & 0.043±.001 & \textbf{0.15 (3\%)} & 0.15 (4\%) & \textbf{0.14 (2\%)} & 0.15 (3\%) \\
 & Base + wavelet cons. (ours) & 0.022±.015 & 0.043±.000 & \textbf{0.15 (3\%)} & 0.17 (12\%) & 0.15 (4\%) & \textbf{0.15 (2\%)} \\
 & Base + signature cons. (ours) & 0.208±.100 & \textbf{0.042±.000} & 0.17 (12\%) & \textbf{0.15 (3\%)} & 0.16 (5\%) & 0.15 (4\%) \\ \cmidrule(l){2-8} 
\multicolumn{1}{c}{HEPC} & Base model (ours) & 0.211±.103 & 0.043±.000 & 0.16 (7\%) & 0.15 (5\%) & \textbf{0.14 (2\%)} & 0.16 (7\%) \\ \cmidrule(l){2-8} 
 & SigDiffusion & 0.070±.032 & 0.050±.012 & 0.20 (16\%) & 0.18 (9\%) & 0.19 (12\%) & 0.21 (22\%) \\
 & DDO ($\gamma$ = 1) & 0.081±.019 & 0.044±.001 & 0.25 (46\%) & 0.23 (38\%) & 0.25 (46\%) & 0.25 (51\%) \\
 & Diffusion-TS & 0.438±.057 & 0.066±.022 & 0.85 (100\%) & 0.87 (100\%) & 0.82 (100\%) & 0.88 (100\%) \\
 & CSPD-GP (RNN) & 0.415±.045 & 0.108±.002 & 0.52 (100\%) & 0.53 (100\%) & 0.55 (100\%) & 0.56 (100\%) \\
 & CSPD-GP (Transformer) & 0.500±.000 & 0.551±.028 & 1.00 (100\%) & 1.00 (100\%) & 1.00 (100\%) & 1.00 (100\%) \\ \midrule
 & Base + Fourier cons. (ours) & 0.072±.015 & 0.030±.001 & 0.18 (18\%) & \textbf{0.18 (17\%)} & 0.19 (18\%) & 0.19 (19\%) \\
 & Base + wavelet cons. (ours) & \textbf{0.069±.023} & \textbf{0.029±.001} & \textbf{0.18 (17\%)} & 0.18 (18\%) & \textbf{0.18 (15\%)} & 0.19 (19\%) \\
 & Base + signature cons. (ours) & 0.114±.026 & \textbf{0.029±.001} & 0.19 (21\%) & 0.19 (18\%) & 0.19 (18\%) & 0.19 (21\%) \\ \cmidrule(l){2-8} 
\multicolumn{1}{c}{Exchange} & Base model (ours) & 0.074±.026 & \textbf{0.029±.001} & 0.19 (18\%) & 0.18 (18\%) & 0.18 (16\%) & \textbf{0.18 (17\%)} \\ \cmidrule(l){2-8} 
\multicolumn{1}{c}{} & SigDiffusion & 0.278±.062 & 0.057±.001 & 0.31 (80\%) & 0.28 (67\%) & 0.28 (65\%) & 0.31 (74\%) \\
 & DDO ($\gamma=1$) & 0.326±.102 & 0.094±.004 & 0.24 (41\%) & 0.24 (42\%) & 0.25 (45\%) & 0.25 (45\%) \\
 & Diffusion-TS & 0.401±.196 & 0.120±.016 & 0.72 (100\%) & 0.71 (100\%) & 0.70 (100\%) & 0.69 (100\%) \\
 & CSPD-GP (RNN) & 0.500±.001 & 0.273±.100 & 0.59 (100\%) & 0.56 (100\%) & 0.55 (100\%) & 0.56 (100\%) \\
 & CSPD-GP (Transformer) & 0.500±.000 & 0.432±.074 & 1.00 (100\%) & 0.99 (100\%) & 0.98 (100\%) & 0.99 (100\%) \\ \midrule
\multicolumn{1}{c}{} & Base + Fourier cons. (ours) & 0.350±.153 & \textbf{0.158±.002} & 0.22 (20\%) & 0.22 (16\%) & \textbf{0.21 (16\%)} & \textbf{0.20 (15\%)} \\
 & Base + wavelet cons. (ours) & 0.282±.097 & 0.163±.005 & \textbf{0.22 (18\%)} & \textbf{0.21 (14\%)} & 0.22 (18\%) & 0.21 (20\%) \\
 & Base + signature cons. (ours) & \textbf{0.227±.190} & 0.158±.003 & 0.23 (22\%) & 0.21 (16\%) & 0.23 (17\%) & 0.21 (18\%) \\ \cmidrule(l){2-8} 
\multicolumn{1}{c}{Weather} & Base model (ours) & 0.379±.063 & 0.176±.015 & 0.23 (24\%) & 0.23 (18\%) & 0.24 (22\%) & 0.23 (20\%) \\ \cmidrule(l){2-8} 
 & SigDiffusion & 0.350±.080 & 0.168±.001 & 0.35 (82\%) & 0.34 (80\%) & 0.33 (76\%) & 0.34 (78\%) \\
 & DDO ($\gamma$ = 1) & 0.356±.196 & 0.307±.007 & 0.26 (45\%) & 0.27 (52\%) & 0.26 (46\%) & 0.26 (47\%) \\
 & Diffusion-TS & 0.498±.003 & 0.438±.035 & 0.49 (100\%) & 0.50 (100\%) & 0.50 (100\%) & 0.49 (100\%) \\
 & CSPD-GP (RNN) & 0.500±.000 & 0.505±.007 & 0.57 (100\%) & 0.56 (100\%) & 0.56 (100\%) & 0.56 (100\%) \\
 & CSPD-GP (Transformer) & 0.500±.000 & 0.490±.000 & 0.91 (100\%) & 0.92 (100\%) & 0.91 (100\%) & 0.91 (100\%) \\ \bottomrule
\end{tabular}}
\hspace*{-1.5cm}
\end{table*}

\subsection{Qualitative evaluation}
\paragraph{Waveform visualisation.}
Figure \ref{fig:flow_weather} shows real and generated samples for the Weather dataset \citep{kolle_documentation_2024} using both base and aligned models (via the Fourier transform). The aligned version shows a significant improvement over the base model in terms of smoothness. This is further noticeable when observing the reconstructions, shown in Figure \ref{fig:flow_weather_reconstructions}, where it is evident that the compression induces a mismatch in local structure.
\begin{figure}[]
    \centering
    \includegraphics[width=0.9\linewidth]{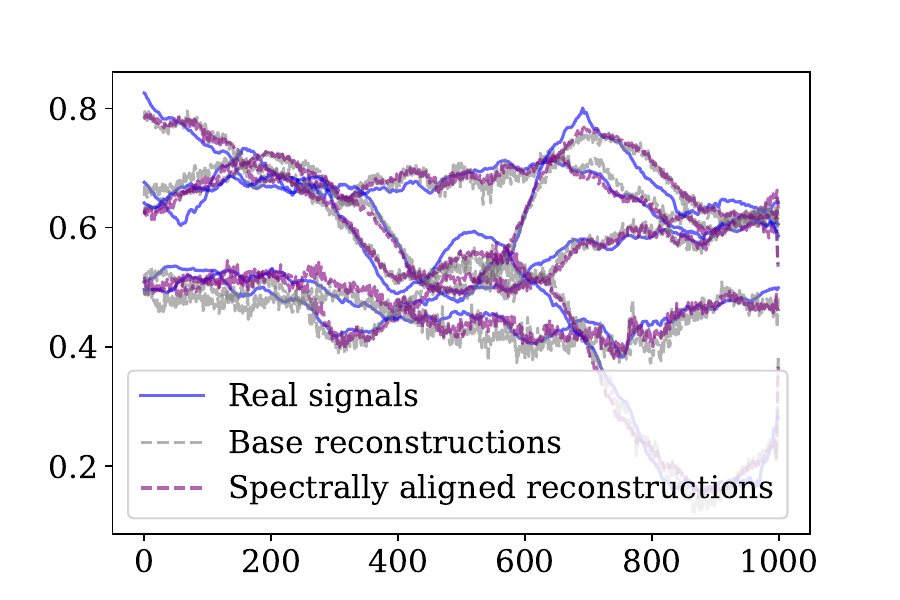}
    \caption{Signals from the weather dataset \citep{kolle_documentation_2024} and their reconstructions from the base and Fourier-aligned autoencoders.}
    \label{fig:flow_weather_reconstructions}
\end{figure}

\begin{figure}[]  
\centering
\begin{subfigure}{0.45\textwidth}
    \centering
    \includegraphics[width=\linewidth]{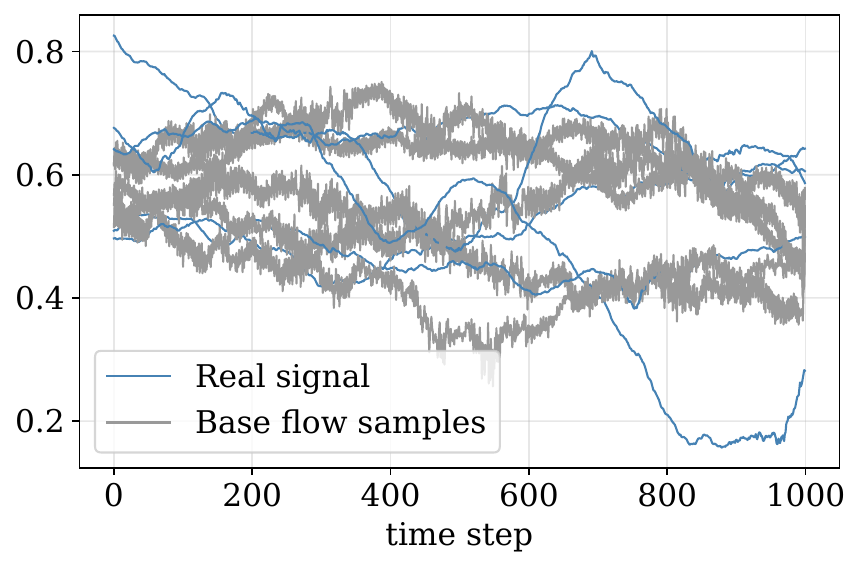}
    \caption{Base model}
\end{subfigure}
\begin{subfigure}{0.45\textwidth}
    \centering
    \includegraphics[width=\linewidth]{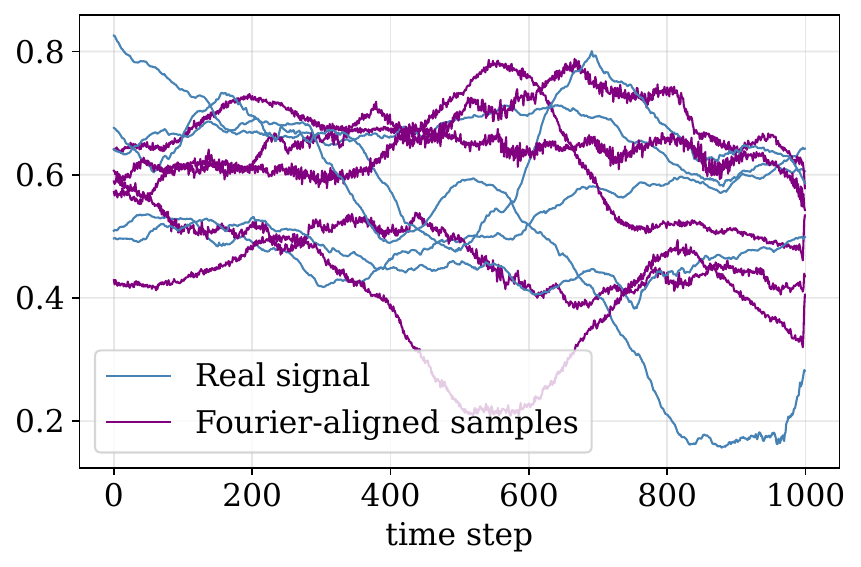}
    \caption{Fourier-aligned model}
\end{subfigure}
\caption{First dimension of FM-generated samples vs testing data for the weather dataset \citep{kolle_documentation_2024}. The subcaption shows the underlying models used. The five samples from both models were drawn by solving the FM ODE from the same Gaussian source sample.}
\label{fig:flow_weather}
\end{figure}

\paragraph{Density and t-SNE visualisations.}
We include visualisations of the generated samples compared to SigDiffusions \cite{barancikova_sigdiffusions_2024} in Figures \ref{fig:hepc_comparison}, \ref{fig:exchange_comparison} and \ref{fig:weather_comparison} for the HEPC, exchange rates and weather datasets, respectively. These consist of probability density estimates and t-SNE dimensionality reduction.  Our samples show a better match to the structure of the underlying datasets. We have shown the Fourier-aligned variants for the plots.

\begin{figure*}[T!]
\centering
\begin{subfigure}{0.24\textwidth}
    \centering
    \includegraphics[width=\linewidth]{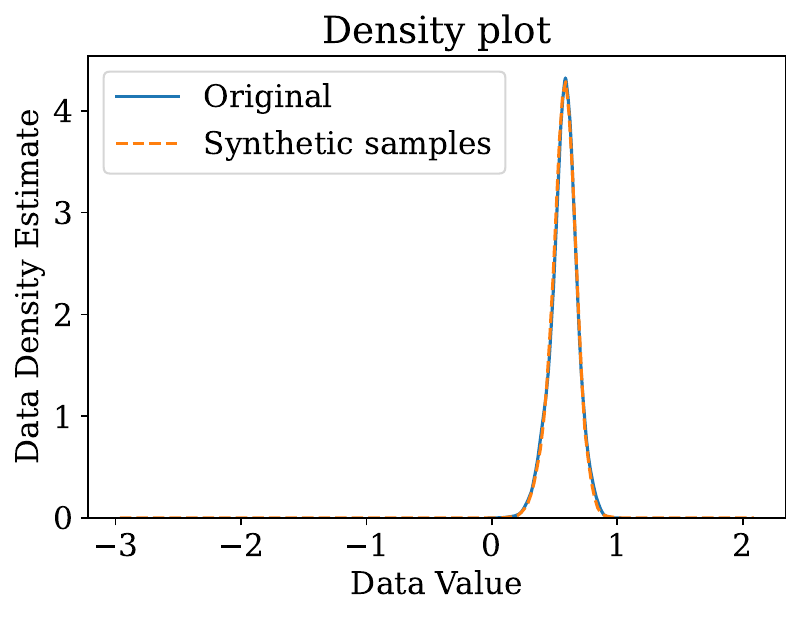}
    \caption{Ours (density)}
\end{subfigure}
\begin{subfigure}{0.24\textwidth}
    \centering
    \includegraphics[width=\linewidth]{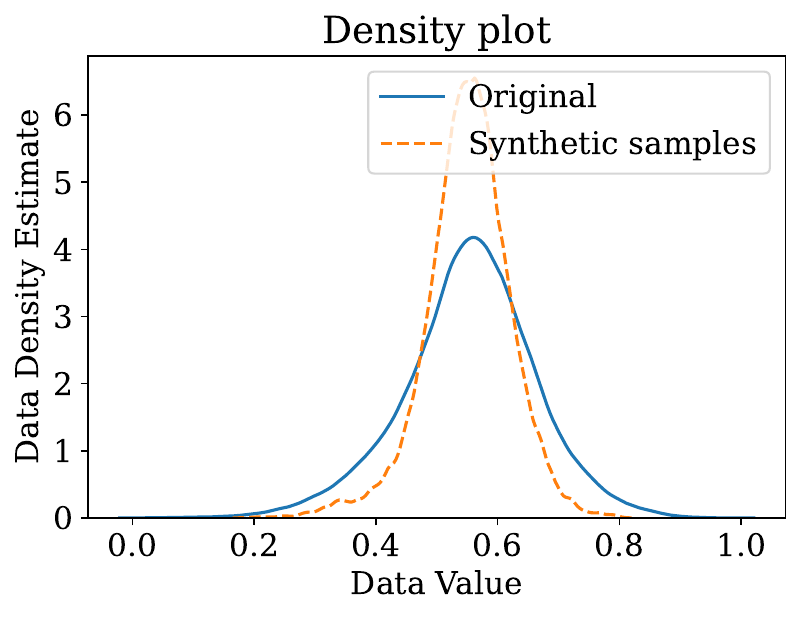}
    \caption{SigDiffusions (density)}
\end{subfigure}
\begin{subfigure}{0.24\textwidth}
    \centering
    \includegraphics[width=\linewidth]{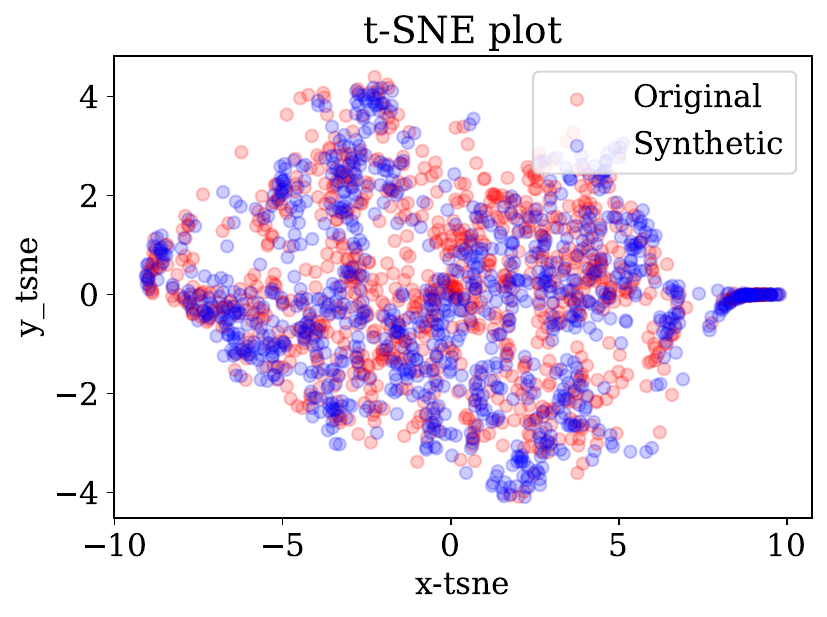}
    \caption{Ours (t-SNE)}
\end{subfigure}
\begin{subfigure}{0.24\textwidth}
    \centering
    \includegraphics[width=\linewidth]{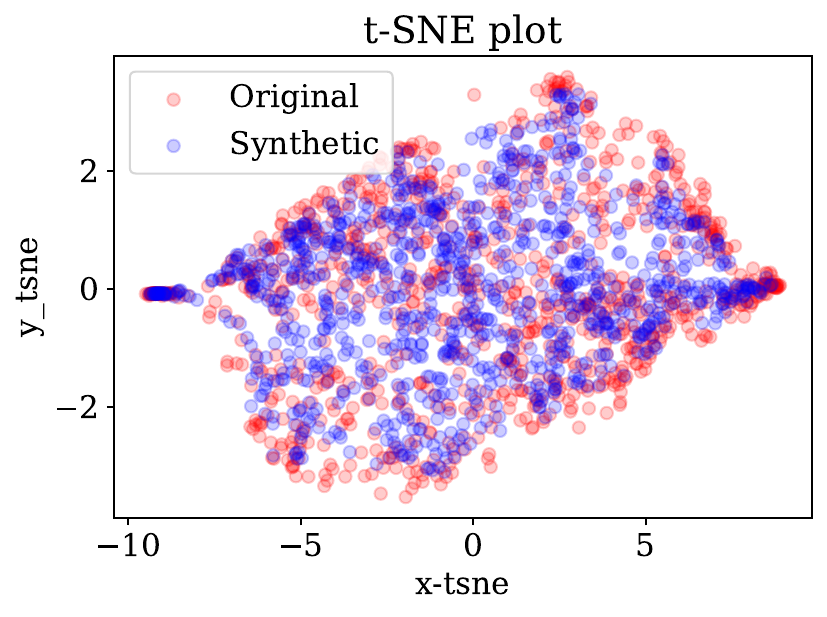}
    \caption{SigDiffusions (t-SNE)}
\end{subfigure}
\caption{Comparison of density and t-SNE plots on the HEPC dataset.}
\label{fig:hepc_comparison}
\end{figure*}

\begin{figure*}[T!]
\centering
\begin{subfigure}{0.24\textwidth}
    \centering
    \includegraphics[width=\linewidth]{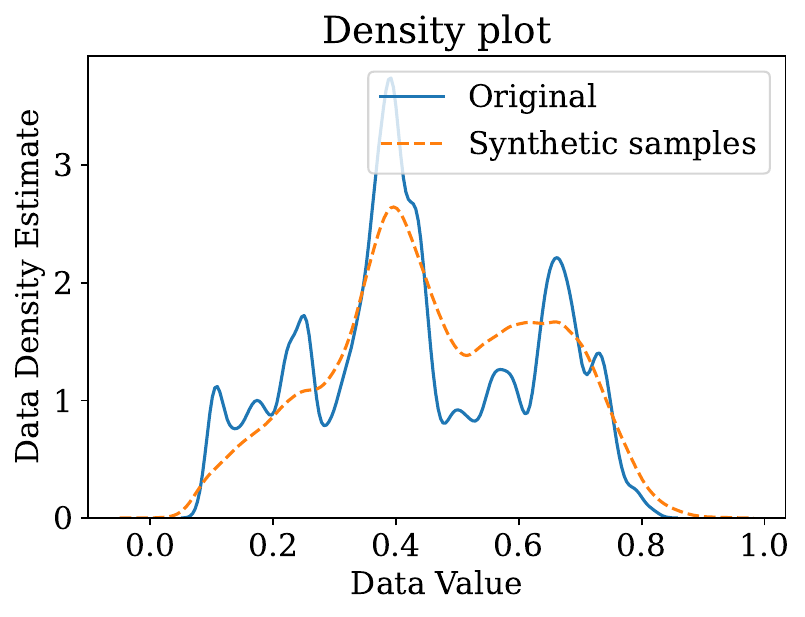}
    \caption{Ours (density)}
\end{subfigure}
\begin{subfigure}{0.24\textwidth}
    \centering
    \includegraphics[width=\linewidth]{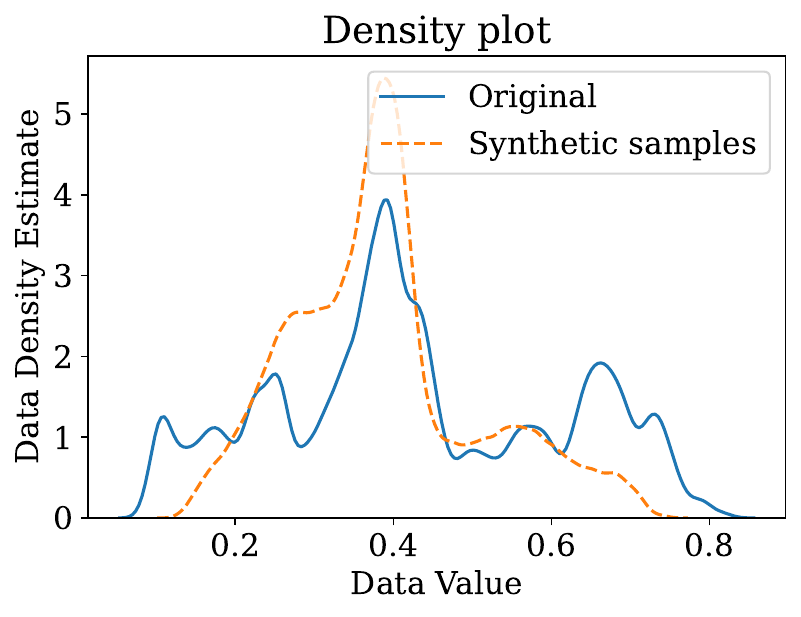}
    \caption{SigDiffusions (density)}
\end{subfigure}
\begin{subfigure}{0.24\textwidth}
    \centering
    \includegraphics[width=\linewidth]{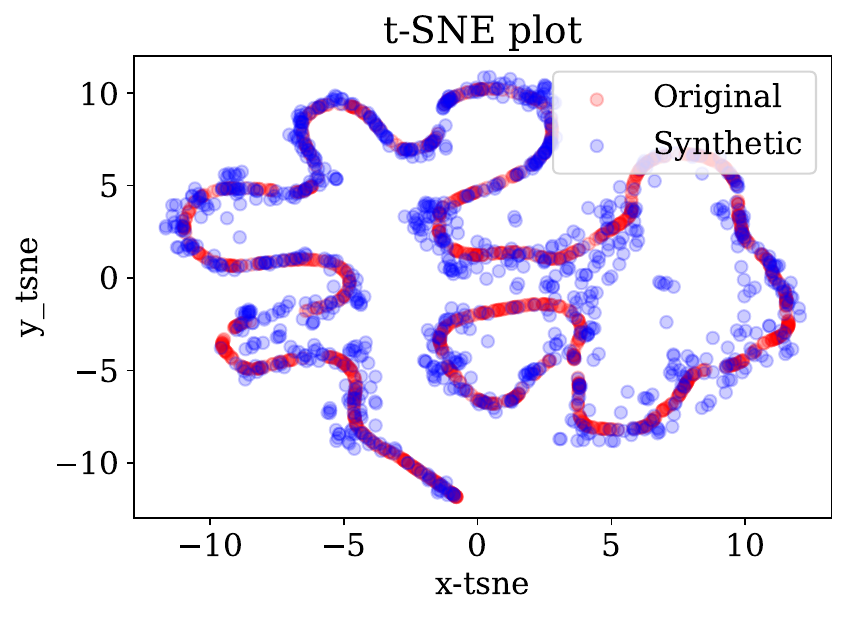}
    \caption{Ours (t-SNE)}
\end{subfigure}
\begin{subfigure}{0.24\textwidth}
    \centering
    \includegraphics[width=\linewidth]{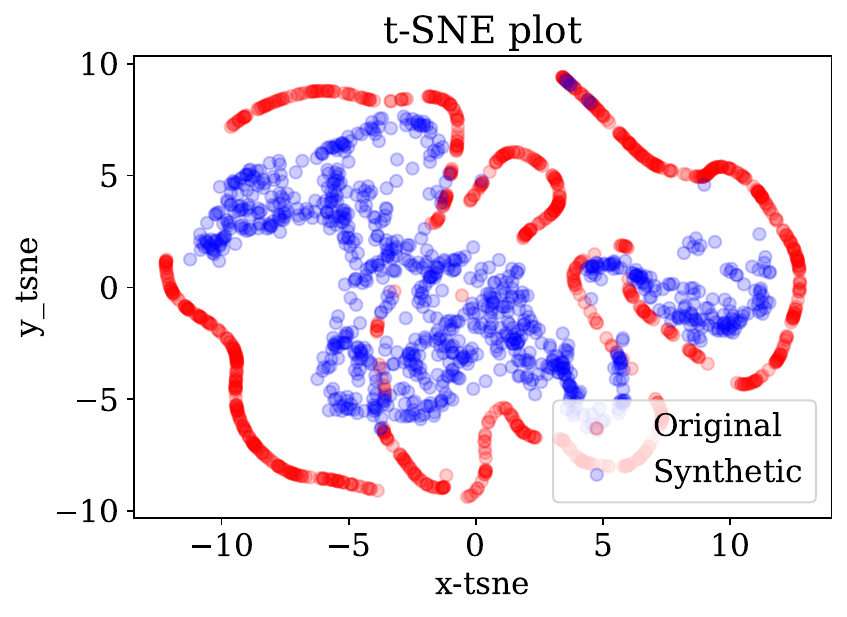}
    \caption{SigDiffusions (t-SNE)}
\end{subfigure}
\caption{Comparison of density and t-SNE plots on the Exchange dataset.}
\label{fig:exchange_comparison}
\end{figure*}

\begin{figure*}[T!]
\centering
\begin{subfigure}{0.24\textwidth}
    \centering
    \includegraphics[width=\linewidth]{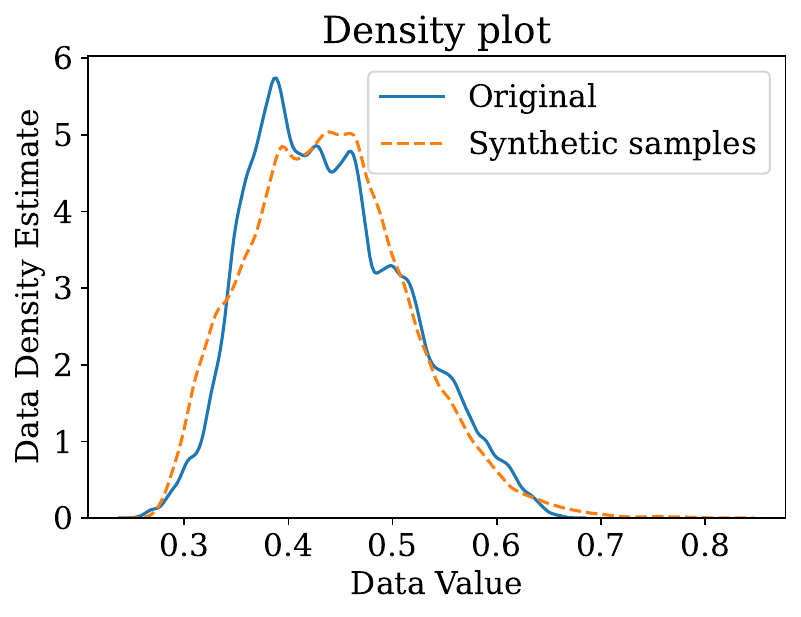}
    \caption{Ours (density)}
\end{subfigure}
\begin{subfigure}{0.24\textwidth}
    \centering
    \includegraphics[width=\linewidth]{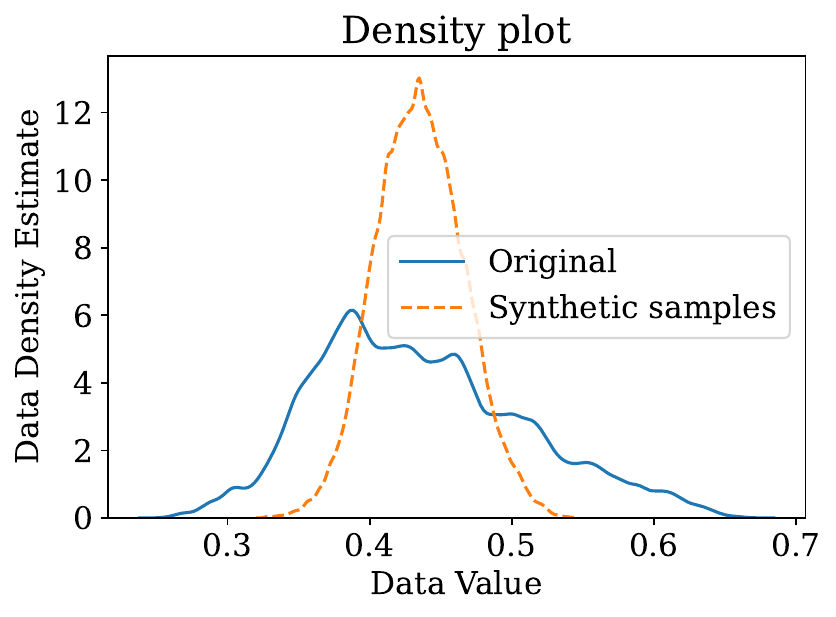}
    \caption{SigDiffusions (density)}
\end{subfigure}
\begin{subfigure}{0.24\textwidth}
    \centering
    \includegraphics[width=\linewidth]{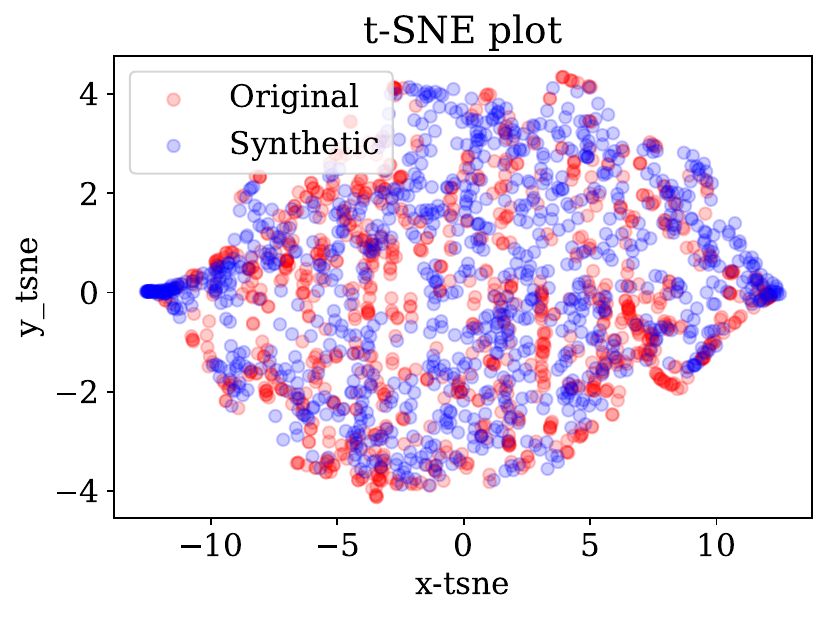}
    \caption{Ours (t-SNE)}
\end{subfigure}
\begin{subfigure}{0.24\textwidth}
    \centering
    \includegraphics[width=\linewidth]{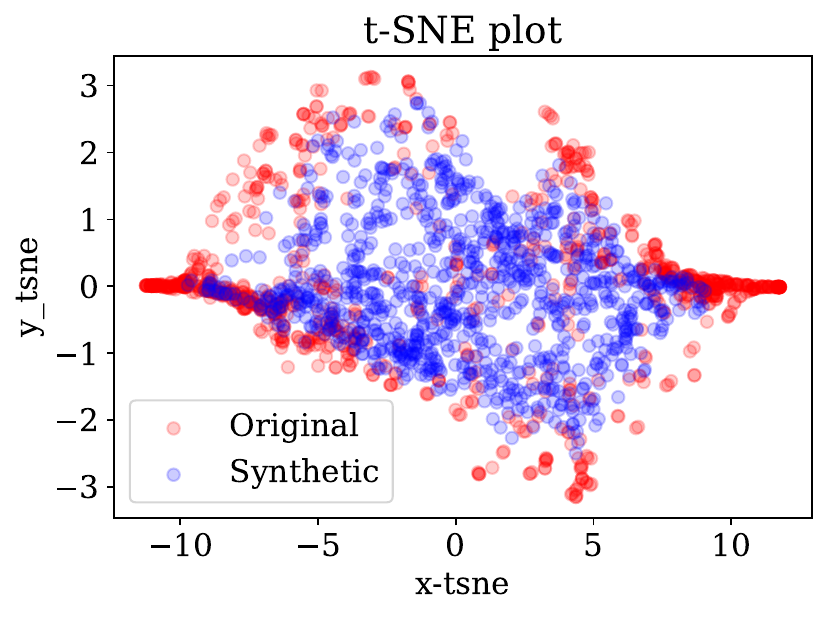}
    \caption{SigDiffusions (t-SNE)}
\end{subfigure}
\caption{Comparison of density and t-SNE plots on the Weather dataset.}
\label{fig:weather_comparison}
\end{figure*}

\subsection{Ablation: The role of high frequencies in improving generation}
\label{sec:high_freq_role}

In this subsection, we ablate the effect of assigning stronger weights to higher-order features of the considered TS transforms. For this purpose, we define alternative weighting schemes for the Fourier and wavelet consistency losses of Section \ref{sec:methodology}. Recall first that we consider decomposable transforms in $K$ levels. Consider the approximation-detail structure of the Wavelet transform (Section \ref{sec:wavelet_loss}). Indeed, a common approach to data compression would be to remove the detail coefficients corresponding to higher levels of detail (removing half of the wavelet coefficients). Akin to this compression, we define a truncated version of the transform-consistency loss by simply removing the lower level of details (i.e., setting their weight to zero).

A version of this is implemented for the Fourier representation by partitioning the frequency bins into ``dyadic levels'', i.e., such that each level of detail (frequencies with lower absolute value) corresponds approximately to $1/2^{k}$ of the frequency grid. The result is a structure that resembles the wavelet transform, to which we can apply the truncation by removing the top high-frequency bins. We show the results in Table \ref{tab:truncated} for the discriminative and predictive scores. We can observe, for both variants, that removing the last level of detail/frequency is detrimental to sample quality, particularly for the Fourier transform. This shows the relevance of high frequencies for improving samples, as these generally control the functional class to which the time series belongs. Moreover, we observe the benefits of stronger weights in higher frequencies or detail levels by comparing the Sobolev weighting to a vanilla Euclidean fine-tuning of the encoder-decoder pair (referred to as the uniform version). The result of not emphasising any level of detail lies between the truncated and the Sobolev variants in terms of the discriminative score. Hence, we can conclude that emphasising these over coarse levels is beneficial for new samples to resemble the data points, thus validating our choice of weighting scheme.

\begin{table}[ht]
\caption{Performance of the truncated version of the Fourier and wavelet consistency losses,
 using the Weather dataset. The abbreviations are disc. for the discriminative score, and pred. for the predictive score, respectively.}
\label{tab:truncated}
\centering
\begin{tabular}{@{}cccc@{}}
\toprule
transform & weights & disc. score $\downarrow$ & \multicolumn{1}{l}{pred. score $\downarrow$} \\ \midrule
wavelet & Sobolev & 0.28±.10 & 0.16±.00 \\
 & truncated & 0.47±.03 & 0.16±.00 \\ \midrule
Fourier & Sobolev & 0.35±.15 & 0.16±.00 \\
 & truncated & 0.5±.00 & 0.47±.01 \\ \midrule
None & uniform & 0.37±.11 & 0.16±.00 \\ \bottomrule
\end{tabular}
\end{table}

\subsection{The location of the reconstruction error}
\label{sec:error_location}

To reflect on how the models are benefiting from the fine-tuning stage, we analyse what proportion of the reconstruction error is reflected in each detail/decomposition level. By analysing the percentage of the reconstruction error in each frequency bin (see Figure~\ref{fig:freq_error}), we can see that for the HEPC and Weather datasets, the share of the reconstruction loss in high frequencies is disproportionate, even though they carry limited energy. We claim this is a consequence of neural networks' tendency to fit lower frequencies first, known as spectral bias \cite{rahaman_spectral_2019}. The transform-consistency losses with Sobolev weighting emphasise gradients towards components that contribute to perceptual fidelity, but are less prioritised in the standard objective. Overall, some of the spectral artefacts are attenuated by our alignment procedure. For instance, the aligned reconstructions show a less pronounced peak at the higher frequencies, for all three variants. Moreover, some peaks at intermediate frequencies are diminished, and some improvement is even observed at low frequencies.

\begin{figure}[]
\centering
\begin{subfigure}{0.5\textwidth}
    \centering
    \includegraphics[width=\linewidth]{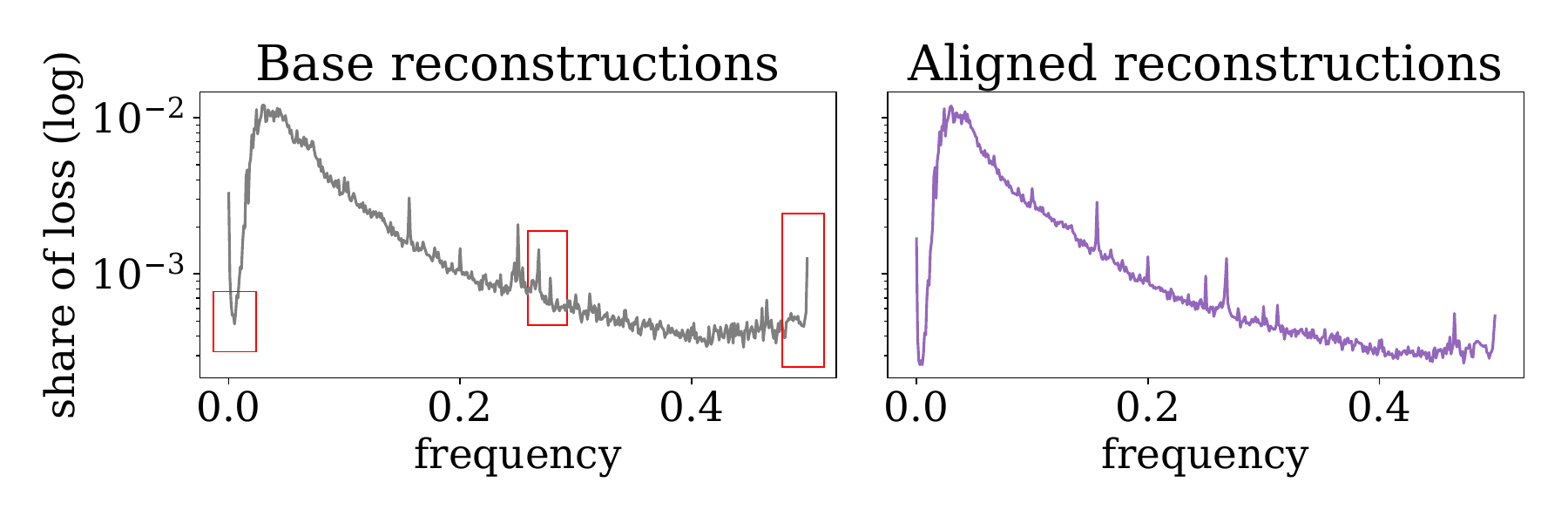}
    \caption{HPEC}
\end{subfigure}
\begin{subfigure}{0.5\textwidth}
    \centering
    \includegraphics[width=\linewidth]{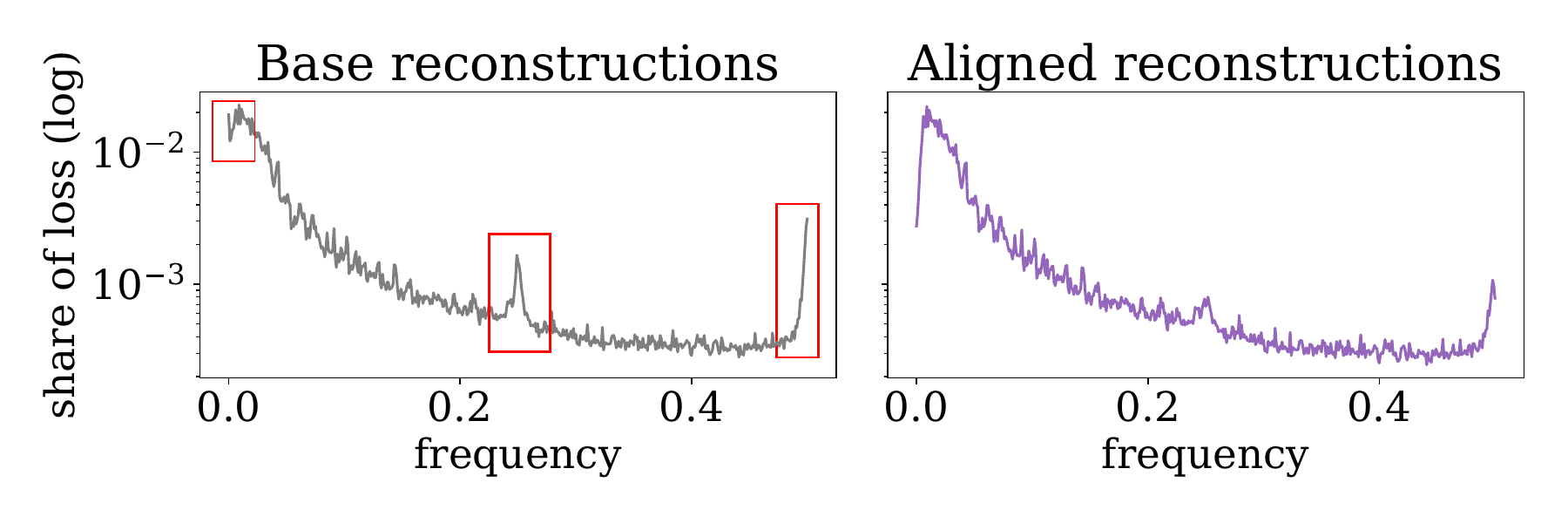}
    \caption{Exchange}
\end{subfigure}
\begin{subfigure}{0.5\textwidth}
    \centering
    \includegraphics[width=\linewidth]{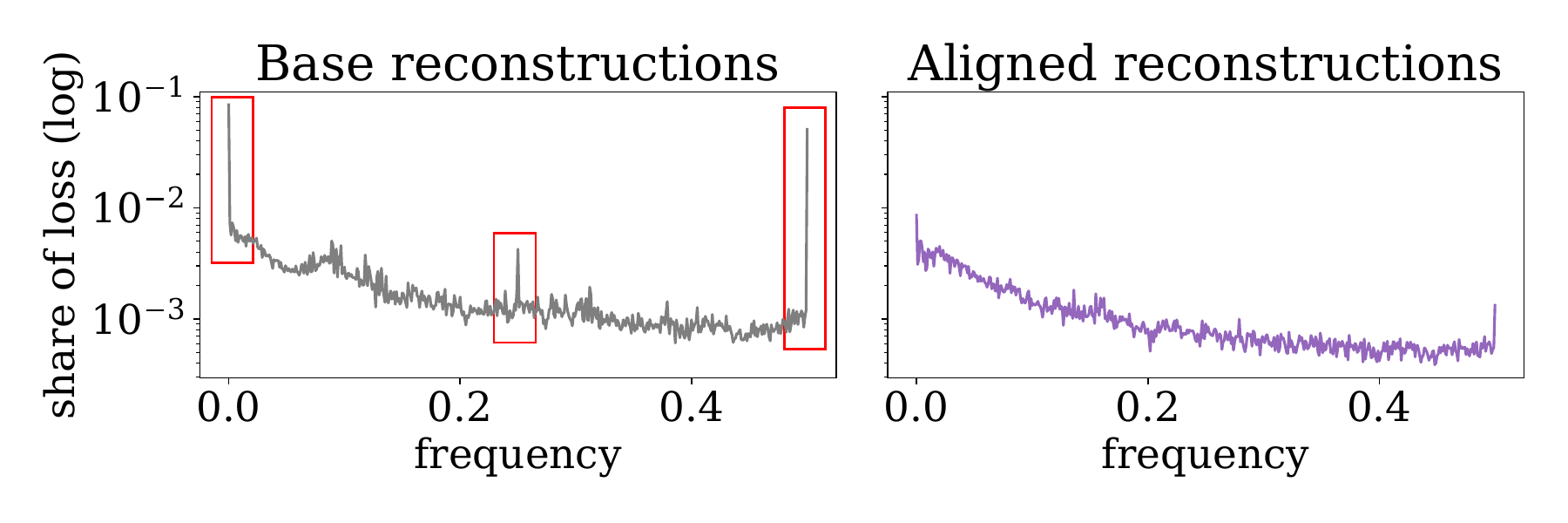}
    \caption{Weather}
\end{subfigure}
\caption{Sum of the reconstruction error for each frequency bin. The subcaption indicates the underlying dataset. The sections where the alignment helps decrease the reconstruction error are highlighted in red.}
\label{fig:freq_error}
\end{figure}

The effect of higher levels of detail having a high share of the reconstruction error is arguably even higher for the HEPC and weather datasets, when using the signature and wavelet transforms as underlying representations. This indicates the representation invariance of the spectral bias phenomenon. Figure~\ref{fig:wavelet_error} shows the per-level share of the reconstruction error for the wavelet transform, while Figure~\ref{fig:signature_error} shows it for the Signature transform. In both cases, we include the error shares for the aligned models relative to the original error.

\begin{figure}[]
\centering
\begin{subfigure}{0.35\textwidth}
    \centering
    \includegraphics[width=\linewidth]{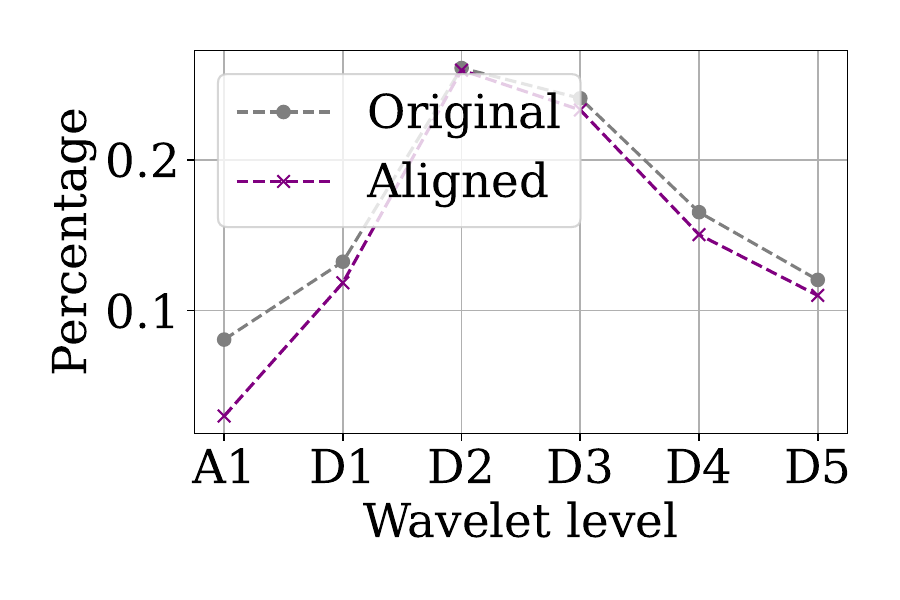}
    \caption{HPEC}
\end{subfigure}
\begin{subfigure}{0.35\textwidth}
    \centering
    \includegraphics[width=\linewidth]{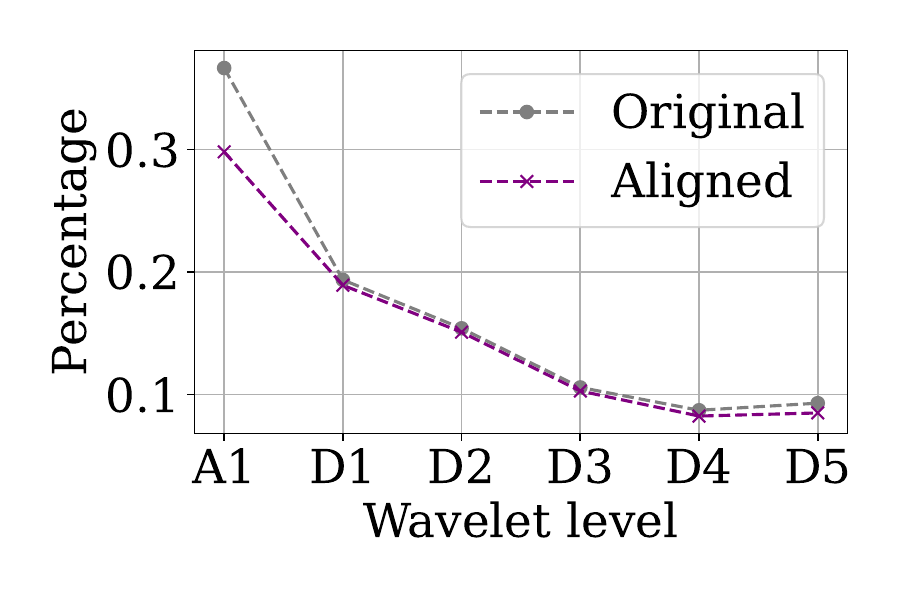}
    \caption{Exchange}
\end{subfigure}
\begin{subfigure}{0.35\textwidth}
    \centering
    \includegraphics[width=\linewidth]{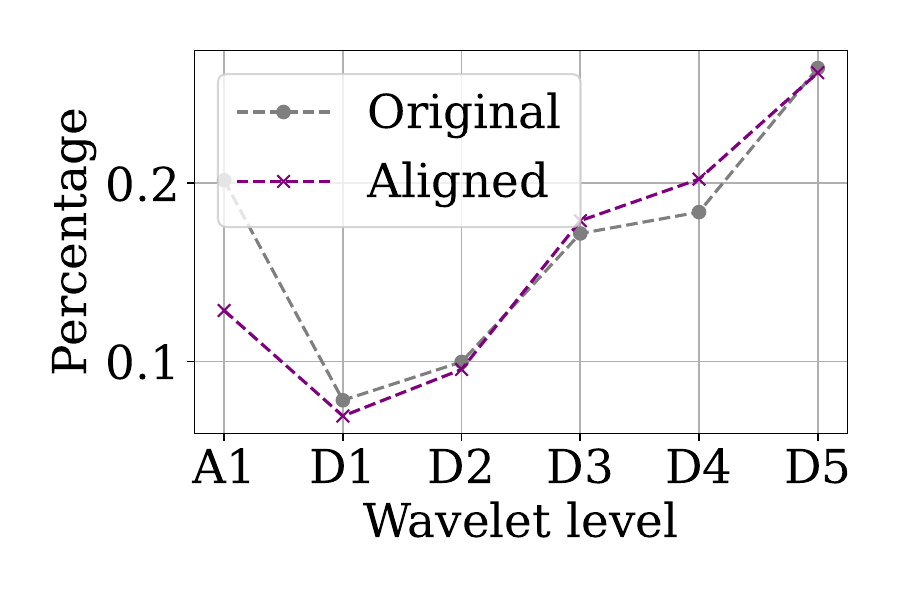}
    \caption{Weather}
\end{subfigure}
\caption{Sum of the reconstruction error averaged over each Wavelet level. The subcaption indicates the underlying dataset.}
\label{fig:wavelet_error}
\end{figure}

\begin{figure}[]
\centering
\begin{subfigure}{0.35\textwidth}
    \centering
    \includegraphics[width=\linewidth]{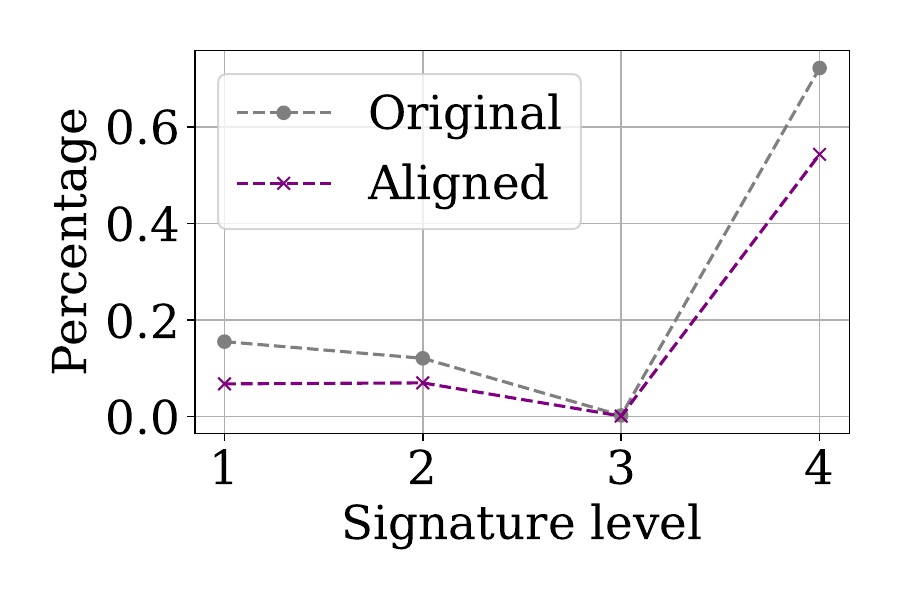}
    \caption{HPEC}
\end{subfigure}
\begin{subfigure}{0.35\textwidth}
    \centering
    \includegraphics[width=\linewidth]{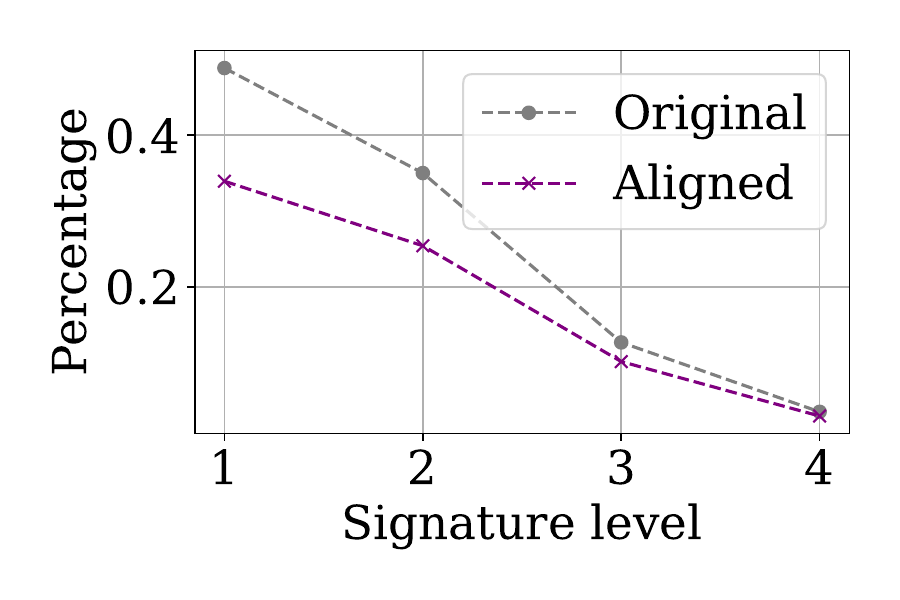}
    \caption{Exchange}
\end{subfigure}
\begin{subfigure}{0.35\textwidth}
    \centering
    \includegraphics[width=\linewidth]{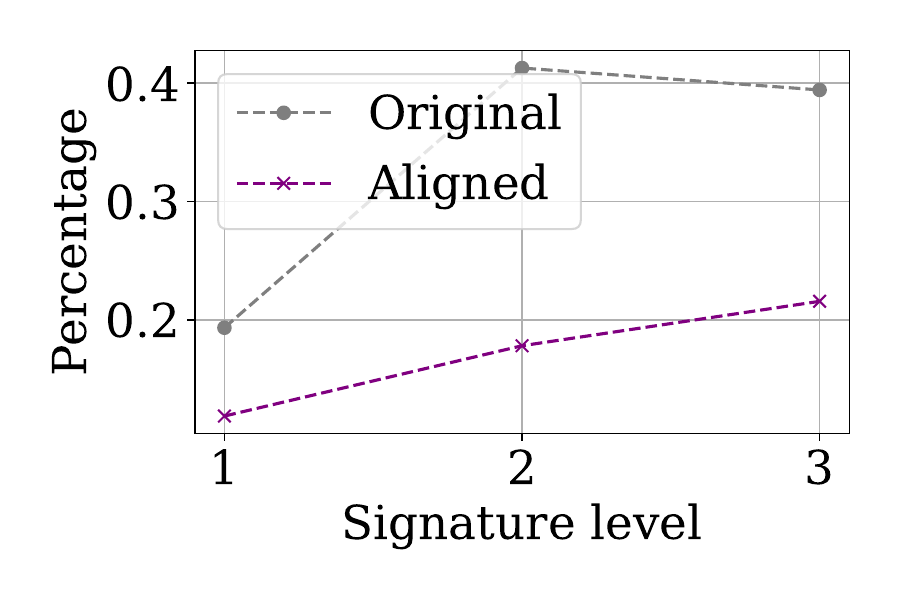}
    \caption{Weather}
\end{subfigure}
\caption{Sum of the reconstruction error averaged over the Signature level. The subcaption indicates the underlying dataset.}
\label{fig:signature_error}
\end{figure}

\section{Related work}
\label{sec:related_work}

Several works have tackled signal generation with flow-based models. We provide an overview of the main approaches in this section. Firstly, the denoising diffusion framework \citep{ho_denoising_2020} was applied to predict each future hidden state sequentially \citep{rasul_autoregressive_2021}. Latent diffusion has been used for unconditional generation of short time series \citep{lim_regular_2023} and for forecasting \citep{feng_latent_2024}. Flow matching has likewise been explored for TS tasks, such as autoregressive forecasting \citep{el-gazzar_probabilistic_2025} and conditional generation \citep{hu_flowts_2024} via rectified flows \citep{liu_flow_2022}. The flexibility of FM to allow for arbitrary source distributions has led to Gaussian processes being used as priors \citep{kollovieh_flow_2025}.

Attempts to use other TS representations include the use of seasonal and trend components \citep{cleveland_stl_1990,shen_multi-resolution_2023,yuan_diffusion-ts_2023}, the use of spectral filters as normalising flows that act on the frequency domain \citep{alaa_generative_2021}, and how the use of frequency features compares to temporal ones for diffusion-based generation \citep{crabbe_time_2024}. 
In this line, Fourier components have been used to encode time series as images to perform diffusion-based generation in that domain \citep{naiman_utilizing_2024}. Some works do use both time and other TS features, such as periodograms, to condition diffusion models for TS generation \citep{fons_lscd_2025}. Recently, \cite{khodakarami_mitigating_2026} have proposed decomposing the latent space into mean and high-frequency components, albeit to counteract the smoothing effect that spectral bias has in neural operators. Moreover, augmenting high-frequency components has been used to counteract the effect that noise addition has on these features \citep{galib_fide_2024}, and the Fourier transform has been used to complement the time-based loss of diffusion models \citep{shen_multi-resolution_2023}. Our work takes a step further in this direction.

Directly aligning representations for unconditional TS generation for more realistic local geometry in cost-efficient generation has not been addressed in the literature to the best of our knowledge.
Works about the use of other losses for training autoencoders for time series and other data modalities are presented in Appendix \ref{sec:more_losses}.

\section{Discussion}
\label{sec:analysis}
\paragraph{The convenience of spectral alignment.}
The origin of the problem of noise in latent flow matching samples can primarily be attributed to (1) a distributional discrepancy between generated data and the ground truth signals as a consequence of using flow matching with a Gaussian prior, or (2) as a result of artefacts introduced by the compression stage. The latter happens since the decoder needs to reconstruct the full signal from a lower-dimensional representation, leading to artificial elements in the final TS sample. Our alignment tackles this, which improves the final samples as demonstrated in Section~\ref{sec:wasserstein}.

Our work is not the first to integrate notions of signal processing and paths with diffusion-based generation. For instance, SigDiffusions \citep{barancikova_sigdiffusions_2024} carry out the diffusion training and sampling entirely in the space of log-signatures. While this leads to improved performance and sampling times over long time series compared to other time-based approaches, the fact that the (log)signature needs to be inverted usually leads to signals that are smoother than the dataset signals. Similarly, Diffusion-TS \citep{yuan_diffusion-ts_2023} decomposes the time series into seasonal and trend components, focusing on global rather than local structure. We believe that this may be beneficial in certain settings, but that work aiming for improved fine-grained sampling is lacking in the literature.

\paragraph{Align vs regularise.}
Despite the fact that Sobolev norms have been used as regularisers \citep{czarnecki_sobolev_2017}, we emphasise that our transform-consistency losses are not equivalent to a Sobolev regularisation of the latent space. Instead, in the event that reconstructions are smoother, these losses will encourage reconstructions to follow the regularity of the input. Regularisations only decrease the norm without following a data-based reference. Hence, our alignment procedure can be regarded as adapting the latent space so that the regularity of the data is preserved, which induces a regularisation effect as a particular case.

\paragraph{Domain adaptability.}
We can infer that the success of our spectral alignment will be more evident in datasets where the local structure plays a predominant role, and this is not well captured by the time-based training. Moreover, the criteria used to evaluate success are key. For instance, fewer benefits are observed when evaluating samples with the predictive score, as opposed to the discriminative score. Indeed, the use of neural networks may easily detect discrepancies in local structure. This suggests that spectrally-aligned samples may be better suited for data augmentation for deep learning training.

The reconstruction error location plots included in Section~\ref{sec:error_location} highlight where the alignment is most beneficial. For instance, the aligned reconstructions for the HEPC dataset show a decrease in both coarse and detailed features. By contrast, the high-level features for the exchange dataset inherently carry less of the loss share. Interestingly, the signature-alignment is able to decrease the error share at all three levels, which may explain the improvements observed for this variant. Regardless of the transform, some unwanted artefacts are still observed. This reflects on the limitations of our method, but also on the fact that minimising all these peaks would imply that the reconstruction is perfect, which is hard to achieve in the context of a compression model. Nevertheless, the fact that we can characterise the error in seemingly different ways for the same dataset shows the flexibility of our approach, and is at the core of the contribution of this paper.

\section{Conclusion}
\label{sec:conclusion}

This work focuses on improving the fidelity of synthetic time series generation through latent-space alignment. We have presented the effectiveness of tuning latent spaces to focus on fine-detail elements, as modelled by time representations beyond the waveform. Indeed, through the use of our transform-consistency losses, aligned models have shown suitability for data augmentation by improving in terms of the discriminative score, which measures indistinguishability with respect to a neural network. Moreover, the generated samples have shown an out-of-sample error lower than that of existing diffusion models for TS generation, whilst showing a significant decrease in sampling times. Potential positive impacts include improved data augmentation and benchmarking in settings where real time series data is limited or costly to collect.

We believe that this study can be extended by analysing the effect of changing the prior to other sources, such as Gaussian processes. This may further decrease the mismatch between flow samples and real signals. Furthermore, the regularity of the autoencoder (e.g., as assessed by its Lipschitz constant) may play a role in the spectral profile of the underlying samples.


\appendix
\section*{Appendix}
This appendix comprises complementary material to the main body. In Section~\ref{sec:complements_TS} we include additional background related to the wavelet and signature transforms. In Section~\ref{sec:more_losses} we provide more related work, particularly regarding latent spaces that are defined with additional loss functions. Sections~\ref{sec:proof_sobolev} and \ref{sec:wasserstein_proof} include the proofs of Proposition \ref{prop:sobolev_equivalence} and \ref{prop:wasserstein_bound}, while Section~\ref{appendix:sig_kernel} has further definitions and background that complement the connection between our Signature-consistency loss and signature kernels. Experimental details are reported in Section \ref{sec:exp_details}, and the variation of results for different Sobolev exponents is shown in Section~\ref{sec:sob_weighting}. Finally, Section~\ref{sec:signature_limitations} complements the discussion for the specific case of the signature-consistency loss and its limitations.

\section{Signal representations}
\label{sec:complements_TS}

\paragraph{Wavelet families.} The form of the wavelets $\phi$ and $\psi$ introduced in Section~\ref{sec:background_ts} will depend on the type of the particular wavelet family chosen. Common families include Haar \citep{haar_theorie_1909}, Daubechies, and Symlets, a more symmetric version of Daubechies \citep{daubechies_orthonormal_1998}.  These are orthogonal wavelet families, which implies that the signal can be reconstructed from its coefficients. The Daubechies family, in particular, is used as a general-purpose family and allows for focusing on increasing degrees of regularity in the signal, allowing for the detection of a high degree of regularity without excessive loss of time localisation \citep{mallat_wavelet_1999}. Typically, only the approximation and coarse levels of $i$ detail coefficients are kept in order to have a more compact and less noisy representation of the signal.

\paragraph{Signatures.}  
This section is based on \cite{cass_lecture_2024}. Let $x: [a,b] \to V$ be a path in $C_p([a,b], V)$, i.e., a continuous path with finite $p$-variation:
\[ \left(\sup_{D \subseteq [s,t]} \sum_{D, t_i} \| x_{t_{i+1}} - x_{t_i} \|^p \right)^{1/p} < \infty \,, \]
where $D \subseteq [s,t]$ denotes a partition of $[s,t]$. Given any subinterval $[s,t] \subseteq [a,b]$, the signature transform $S(x)_{[s,t]}$ will be given by
\[ S(x)_{[s,t]} = \left(1, S(x)^{(1)}_{[s,t]}, \ldots, S(x)^{(k)}_{[s,t]}, \ldots \right)\,, \]
where
\[ S(x)^{(k)}_{[s,t]} = \int_{s < t_1 < t_2 < \ldots < t_k < t} dx_{t_1} \otimes dx_{t_2} \otimes \ldots \otimes dx_{t_k} \,.\]
$S(x)^{(k)}_{[s,t]}$ is called the $k$-fold iterated integral.

Since the magnitude of the coefficients shows a factorial decay, it is appropriate to truncate the signature transform at a certain ``depth'' $n$. Indeed, $n$-step ST will be given by
\[
S(x)^{\leq n}_{[s,t]} = \left(1, S(x)^{(1)}_{[s,t]}, \ldots, S(x)^{(n)}_{[s,t]}, \right)
\]

Note that $S^{(k)}(x)$ lies in the $k$-fold tensor product of $\mathbb{R}^d$ \citep{cass_lecture_2024}, i.e., it has dimensionality $d^k$. Signatures, by definition, do not depend on the specific set of coordinates employed. Other practical properties of STs include invariance under reparameterisations and injectivity for certain classes of paths. We refer to \cite{cass_lecture_2024} for a formal presentation of these results.

Chen's relation is an essential result for the implementation of signatures in practice. It states that for $1 \leq p < 2$, $x \in C_p([a,b])$, $y \in C_p([b,c])$, then
\[ S(x \otimes y)_{[a,c]} = S(x)_{[a,b]} \otimes S(y)_{[b,c]} \,,\]
where $x \otimes y = \mathbf{1}_{t \in [a,b]} x_t + \mathbf{1}_{t \in [c,b]} (y_{t-b+a})$ denotes the concatenation of $x$ and $y$.

A consequence of Chen's relation is that for piecewise linear $x$:
\[ S(x)_{[t_0, t_n]} = \exp(x_{t_1} - x_{t_0}) \cdot \exp(x_{t_2} - x_{t_1}) \cdot \ldots \cdot \exp(x_{t_n} - x_{t_{n-1}}) \]
which is used to compute signatures.

In practice, one approximates the signals with piecewise linear paths. Indeed, given the output continuous piecewise linear function $f : [0,1] \rightarrow \mathbb{R}^d$ satisfying $f(t_i) = x_{t_i} \in \mathbb{R}^d$, Chen's identity \citep{lyons_differential_1998}
allows us to write:

\[ S^{( k)}(x) = \left( \int_{0 < t_1 < \dots < t_k < 1} \prod_{j=1}^k \frac{d {f_i}_j(t_j)}{dt} dt_1 \dots dt_k \right)_{i_1, \dots, i_k \leq d} \,, \]
which follows the notation of \cite{kidger_deep_2019}.

\section{Additional loss functions for AEs}
\label{sec:more_losses}

A wide range of variations on the training strategy have been proposed to improve representations. In the context of image generation, latent spaces are often trained using perceptual losses \citep{dieleman_generative_2025}. Indeed, the availability of pre-trained networks having an outstanding correlation with human perception \citep{zhang_unreasonable_2018} has provided the community with a useful toolkit to model high-level features. These perceptual losses have proved effective to use in the context of image autoencoders \citep{pihlgren_improving_2020}.

Moreover, perceptual features have been used to ensure the consistency between input and output images along the intermediate hidden representations \citep{hou_deep_2017}. In the context of video generation, additional losses have been implemented to train the autoencoder \citep{hacohen_ltx-video_2024}, enforcing the consistency of the L1 distance between the discrete wavelet transforms of the input and the reconstruction.

In order to improve time series representations, other features have been leveraged in the context of contrastive learning \citep{trirat_universal_2024}. For instance, frequency invariance is improved by minimising the distance between frequency embeddings, as well as between time embeddings \citep{zhang_self-supervised_2022}. This is done for samples and between samples and augmentations (positive pairs for contrastive learning), which aims at obtaining representations that are generalizable to unseen datasets via fine-tuning. Time-frequency invariance has also been used for TS representations that allow for anomaly detection \citep{li_time-frequency_2026}. On the other hand, TS2Vec learns representations of each timestep contrastively, by sampling overlapping segments of a TS and matching the vectors within the common subsegment \citep{yue_ts2vec_2022}. Crucially, their temporal and instance-wise contrastive losses are computed at multiple granularity levels via iterative max pooling. In addition, learnt shapelets 
have also been used as features in multi-scale TS representation learning \citep{liang_shapelet-based_2023}. These works have, however, focused on universal representations that can lead to performance in downstream tasks, rather than tackling generative modelling.

\section{Proof of Proposition \ref{prop:sobolev_equivalence}}
\label{sec:proof_sobolev}

By definition of the Sobolev norm and using the fact that $B_k$, $k=1,\dots,K$ partitions the frequency domain,
\[ \| x - \bar{x} \|_{H^s} = \sum^K_{k=1} \sum_{\xi_k\in B_k} (1+|\xi|^2)^{s} |\mathcal{F}[x-\bar{x}](\xi_k)|^2 \,.\]
Since, by construction, we have that
\[ \sum_{\xi_k\in B_k} m^2_k |\mathcal{F}[x-\bar{x}](\xi_k)|^2 \leq \sum_{\xi_k\in B_k} (1+|\xi|^2)^s |\mathcal{F}[x-\bar{x}](\xi_k)| \,,\]
using the linearity of $\mathcal{F}$, we obtain that
\[ \sum^K_{k=1} \sum_{\xi_k\in B_k} m^2_k |\mathcal{F}[x](\xi_k)-\mathcal{F}[\bar{x}](\xi_k)|^2 \leq \| x - \bar{x} \|_{H^s}  \,.\]
The inequality $\| x - \bar{x} \|_{H^s} \leq \sum^K_{k=1} \sum_{\xi_k\in B_k} M^2_k |\mathcal{F}[x](\xi_k)-\mathcal{F}[\bar{x}](\xi_k)|^2 $ follows analogously. We conclude that $\mathcal{L}_{\mathcal{F},s}$ is comparable to $\| \cdot \|_{H^s}$.

\section{Proof of Proposition \ref{prop:wasserstein_bound}}
\label{sec:wasserstein_proof}

\begin{proof}
By the triangle inequality, we have:
\[ W_2^{(T)}(p_\text{data}, \tilde{p}_1) \leq W_2^{(T)}(p_\text{data}, \bar{p}_1)+ W_2^{(T)}(\bar{p}_1, \tilde{p}_1) \]
for $\bar p_1 = (\mathcal{D} \circ \mathcal{E})_\# p_\text{data}$.

On the one hand, since the Wasserstein distance takes the minimum value of $\int d_T(x_1,x_2)^2 \pi(x_1,x_2)$, in particular it holds that for the coupling given by taking $x\sim p_\text{data}$ and $\tilde x = \mathcal{D}(\mathcal{E}(x))$:
$W_2^{(T)}(p_1, \bar{p}_1) \leq \mathbb{E}_{x\sim p_\text{data}}[d_T(x_1, \tilde{x})]^{1/2}$,
which corresponds to the expected value of the transform-consistency loss.

For the second term of the right-hand side, we have
\[W_2^{(T)}(\bar{p}_1, \tilde{p}_1) \leq L_{\mathcal{D},T} \, W_2(p_{\tilde{z}}, p_z) \leq L_{\mathcal{D},T} \left( e^{1+2L_\theta} H(u_\theta) \right)^{1/2} \]
where we have used \cite{albergo_building_2022} for the second inequality, by assuming the flow and decoder to be Lipschitz.
\end{proof}

\section{The signature-consistency loss as a kernel}
\label{appendix:sig_kernel}

In this section, we highlight the connection between the signature-consistency loss and signature kernels.

\subsection{Unparameterised paths}
\label{sec:unparameterised_paths}
This signature kernel, as introduced in Eq.~\ref{eq:sig_kernel} will be well-defined if
\[ \|S(x)\|_w^2 = \sum_{k=0}^{\infty} w_k\, \|S(x)^{(k)}\|_{V^{\otimes k}}^2 < \infty, \]
which holds when $\sum_{k=0}^\infty \frac{C^k w_k}{(k!)^2}$ is a convergent series for any $C > 0$.

Denote by $\sim_t$ the tree-like equivalence relation for absolutely continuous paths, i.e., $x$ and $y$ will be tree-like equivalent if there exists a time reparametrisation $\tau(\cdot)$ such that $x(t) = y(\tau(t))$. Since $S(x) = S(y)$ if and only if $x \sim_t y$ \citep{boedihardjo_signature_2016}, the signature kernel will be defined for elements of the quotient space $C_{0,p}/\sim_t$, where $C_{0,p}$ is the subspace of $C_p(V)$ such that paths start at $0$. This space is often referred to as the set of unparameterised paths, and its elements are denoted by $[x]$.

Note that, even though $k_w$ is defined for unparameterised paths, it can be extended to continuous paths with bounded $1$-variation \citep{cass_lecture_2024}. Consequently, the notation $k_w(x,y)$ can be adopted.

\subsection{RKHS and the reproducing property}
\label{sec:RKHS}
This section is based on \cite{cass_lecture_2024}. Recall that a Hilbert space of functions defined over $\mathcal{X}$ is said to be a reproducing kernel Hilbert space (RKHS) if $f \mapsto f(x)$ is a continuous linear functional, that is, $\exists\, C_x \geq 0$ such that
\[
|f(x)| \leq C_x \|f\|_{\mathcal{H}}, \qquad \forall f \in \mathcal{H}.
\]
The Riesz representation theorem guarantees the existence of a unique functional $k(x, \cdot)$ in the RKHS $\mathcal{H}$ satisfying
\begin{equation}
\langle k(x, \cdot), f \rangle_{\mathcal{H}} = f(x), \qquad \forall f \in \mathcal{H}, \; \forall x \in \mathcal{X}.
\label{eq:reproducing}
\end{equation}

Equation~\eqref{eq:reproducing} is referred to as the reproducing property. This then defines a kernel
\[ k(x,y) = \langle k(x,\cdot), k(y,\cdot) \rangle_{\mathcal{H}}. \]

Conversely, thanks to the Moore-Aronszajn theorem \citep{aronszajn_theory_1950}, given a positive semi-definite kernel, i.e., such that $k := (k(x_i,x_j))_{i,j}$ is a positive semi-definite matrix for any $n \in \mathbb{N}$ and any $x_1,\ldots,x_n \in \mathcal{X}$, there exists a unique RKHS $\mathcal{H}$ such that the reproducing property in Equation~\eqref{eq:reproducing} holds.

Signatures can be used to define such spaces. Given $v = (v_1,\ldots,v_k)$ and $\tilde{v} = (\tilde{v}_1,\ldots,\tilde{v}_k)$ in $V^{\otimes k}$ and a weight function $w : \mathbb{N} \cup \{0\} \to \mathbb{R}_+$, the $w$-inner product $\langle \cdot, \cdot \rangle_w$ over the tensor algebra $T(V)$ is given by
\[ \langle v, \tilde{v} \rangle_w := \sum_{k=0}^{\infty} w_k\, \langle v_k, \tilde{v}_k \rangle_{V^{\otimes k}}, \]
where $\langle v_k, \tilde{v}_k \rangle_{V^{\otimes k}}$ is the canonical Hilbert-Schmidt inner product
\[ \langle v, \tilde{v} \rangle_{V^{\otimes k}} = \prod_{i=1}^k \langle v_i, \tilde{v}_i \rangle_V. \]

\subsection{Proof of proposition \ref{prop:sigkernel}}
\label{sec:sigkernel_proof}

\begin{proof}
Let $\tilde{w}_k = (1+a^2k^2)^s$ for $a>0$ and $s>0$, and let $C>0$. The series $\sum_{k=0}^\infty \frac{C^k\tilde{w}_k}{(k!)^2}$ is known to converge. Indeed, the ratio test implies that if
\[ \lim_{k \to \infty} \left| \frac{ C^{k+1}\tilde{w}(k+1)}{(k+1)!} \cdot \frac{k!}{C^{k}\tilde{w}_k} \right| < 1, \]
then the series converges absolutely. Since
\begin{alignat*}{2}
\frac{ C^{k+1} \tilde{w}(k+1)}{(k+1)!} \frac{k!}{C^{k}\tilde{w}_k} & = \frac{C}{(k+1)^2}\left(\frac{1+a^2(k+1)^2}{1+a^2k^2} \right)^s
\\ & = \frac{C}{(k+1)^2} \left( 1+\frac{2k+1}{k^2} \right)^s 
\\ & \longrightarrow_{k\to\infty} 0,
\end{alignat*}
this is indeed the case. Then, for any $ \sum_{k=0}^{\infty} \frac{C^k \tilde w_k}{(k!)^2}$
converges absolutely. This, in particular, holds for $a=\frac{1}{2 k_{\text{max}}}$. Consequently, by Lemma 2.1.10 in \cite{cass_lecture_2024}, $k_w$ is a positive semi-definite kernel. Then the Moore-Aronszajn theorem \citep{aronszajn_theory_1950} implies that there is a unique RKHS such that $k_w$ has the reproducing property.
\end{proof}

\section{Datasets and experimental details}
\label{sec:exp_details}

Following the experimental setup described in Section \ref{sec:experiments}, we provide details on the datasets, running times and parameters in Table \ref{tab:exp_details}. Times are reported using an NVIDIA GeForce RTX3090 GPU. The hidden dimensionality of the latent space was set to $64$ for high-dimensional sets, and $32$ for HEPC. The last linear layer of the encoder projects the output of the convolutions to a latent space of $128$ dimensions ($64$ in the case of HEPC). The base autoencoders were trained for $500$ epochs. While training the AE for considerably longer yielded cleaner samples, the spectral alignment achieves this sample quality with a very quick fine-tuning stage as opposed to relying on longer training times.


\begin{table*}[!ht]
\centering
\caption{Experimental details for the empirical evaluation of the transform-consistency losses. We present each dataset and its dimensionality. This, in turns, modifies the dimensionality of their transformed inputs, which are detailed along with the depths/decomposition levels used. The training times for each stage are also provided.}
\label{tab:exp_details}
\small{
\begin{tabular}{@{}lcllccccc@{}}
\toprule
Dataset & Dim. & N. train & Variant & Levels (dim.) & Train time & Fine-tuning & \multicolumn{1}{l}{flow training} & Total time \\ \midrule
\multirow{4}{*}{HEPC} & \multirow{4}{*}{1} & \multirow{4}{*}{8242} & Base &  & 4m 5s &  & \multirow{4}{*}{5m 34s} & 9m 39s \\
 &  &  & Fourier & - (1000) &  & 14s &  & 9m 53s \\
 &  &  & Wavelet & 5 (1040) &  & 34s &  & 10m 13s \\
 &  &  & Signature & 5 (62) &  & 6m 35s &  & 16m 14s \\ \midrule
\multirow{4}{*}{\begin{tabular}[c]{@{}l@{}}Exchange\\ rates\end{tabular}} & \multirow{4}{*}{8} & \multirow{4}{*}{5088} & Base &  & 3m 2s &  & \multirow{4}{*}{3m 24s} & 6m 26s \\
 &  &  & Fourier & - (1000 $\times$ 8) &  & 9s &  & 6m 35s \\
 &  &  & Wavelet & 5 (1040 $\times$ 8) &  & 29s &  & 6m 55s \\
 &  &  & Signature & 3 (584) &  & 3m 54s &  & 10m 20s \\ \midrule
\multirow{4}{*}{Weather} & \multirow{4}{*}{14} & \multirow{4}{*}{8340} & Base &  & 5m 12s &  & \multirow{4}{*}{2m 52s} & 8m 4s \\
 &  &  & Fourier & - (1000 $\times$ 14) &  & 17s &  & 8m 21s \\
 &  &  & Wavelet & 5 (1040 $\times$ 14) &  & 37s &  & 8m 21s \\
 &  &  & Signature & 3 (2954) &  & 3m 6s &  & 11m 10s \\ \bottomrule
\end{tabular}}
\end{table*}

Model details are provided in Appendix~\ref{sec:exp_details}, Table~\ref{tab:model_details}. The encoder-decoder pair architecture corresponds to mirrored convolutional neural networks with a stride of 2, which halves the temporal resolution after each of the two layers. The hidden dimensionality is set to half the latent space dimensionality, and the models were trained for 500 epochs with a learning rate of 0.0001. The velocity field of the flow matching model is parameterised via a UNET whose blocks are multilayer perceptrons. The learning rate used is 0.001, and the time is included using a learnable embedding of the same dimensionality as the latent codes. The batch size was set to 128 for both latent space and flow training.

\begin{table*}[!ht]
\centering
\caption{Specifications of epochs and latent and hidden dimensionality for each dataset. The abbreviation ``dim.'' stands for dimensionality and ``params.'' for parameters. The number of epochs used to train the laten flow models is shown as ``FM epochs''.}
\label{tab:model_details}
\small{
\begin{tabular}{@{}lccccc@{}}
\toprule
Dataset & Dim. & \multicolumn{1}{l}{Latent dim.} & \multicolumn{1}{l}{Inner dim. flow} & \multicolumn{1}{l}{Total params.} & \multicolumn{1}{l}{FM epochs} \\ \midrule
HEPC & 1 & 64 & 258 & 2.021 M & 500 \\ \midrule
\begin{tabular}[c]{@{}l@{}}Exchange\\ rates\end{tabular} & 8 & 128 & 512 & 8.037 M & 500 \\ \midrule
Weather & 14 & 128 & 512 & 8.037 M & 250 \\ \bottomrule
\end{tabular}
}
\end{table*}

\section{The exponent of the Sobolev weighting}
\label{sec:sob_weighting}

Our weighting scheme is inspired by the theoretical connections with the Sobolev norms outlined in Section \ref{sec:sobolev}. Throughout our experiments, we set the Sobolev exponent to $1$. As can be seen in Figure \ref{fig:sob_variation}, only a slight increase in the marginal score, which corresponds to the absolute difference between the real and generated marginal distributions, can be seen as $s$ increases. By contrast, while the discriminative score does not vary significantly for the HEPC and exchange rates datasets, less improvement is observed for $s=2.5$ for the linear transforms applied to the Weather dataset. Overall, we propose $s=1$ as the standard choice. The performance of the signature variants for HEPC and exchange datasets is discussed in Section \ref{sec:signature_limitations}.

\begin{figure*}[h]  
\centering
\begin{subfigure}{0.48\textwidth}
    \centering
    \includegraphics[width=\linewidth]{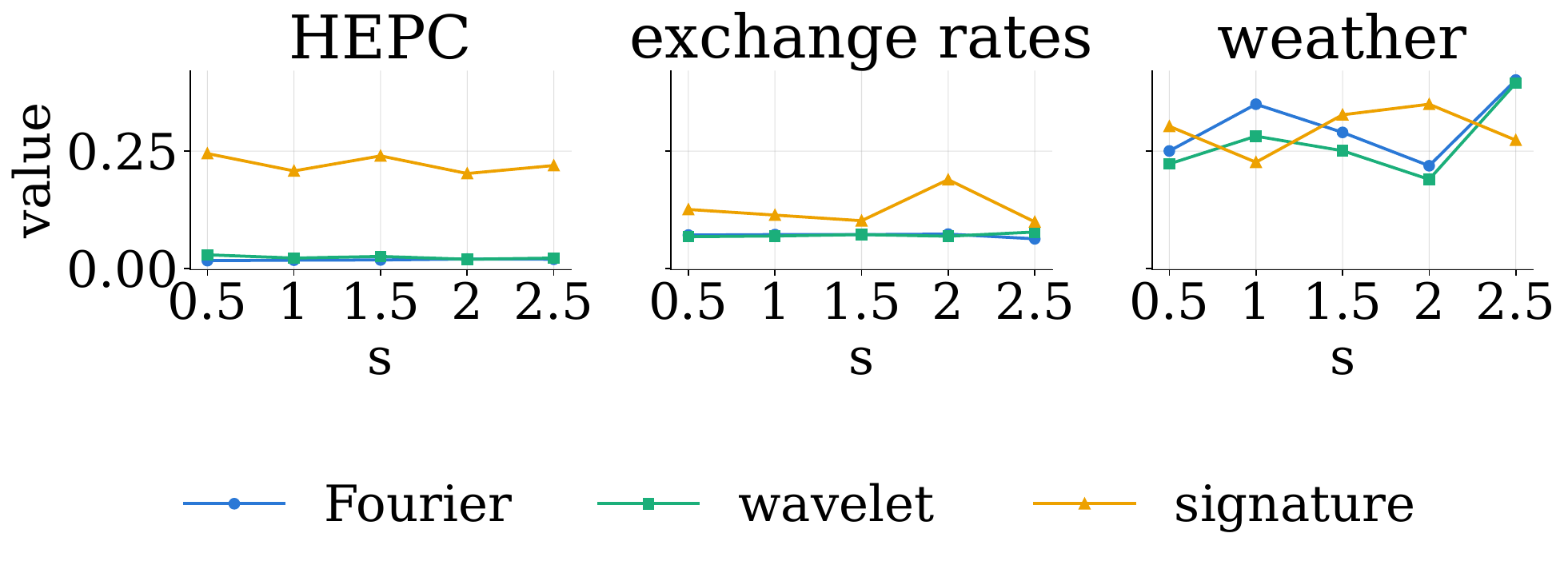}
    \caption{Discriminative score}
\end{subfigure}
\begin{subfigure}{0.48\textwidth}
    \centering
    \includegraphics[width=\linewidth]{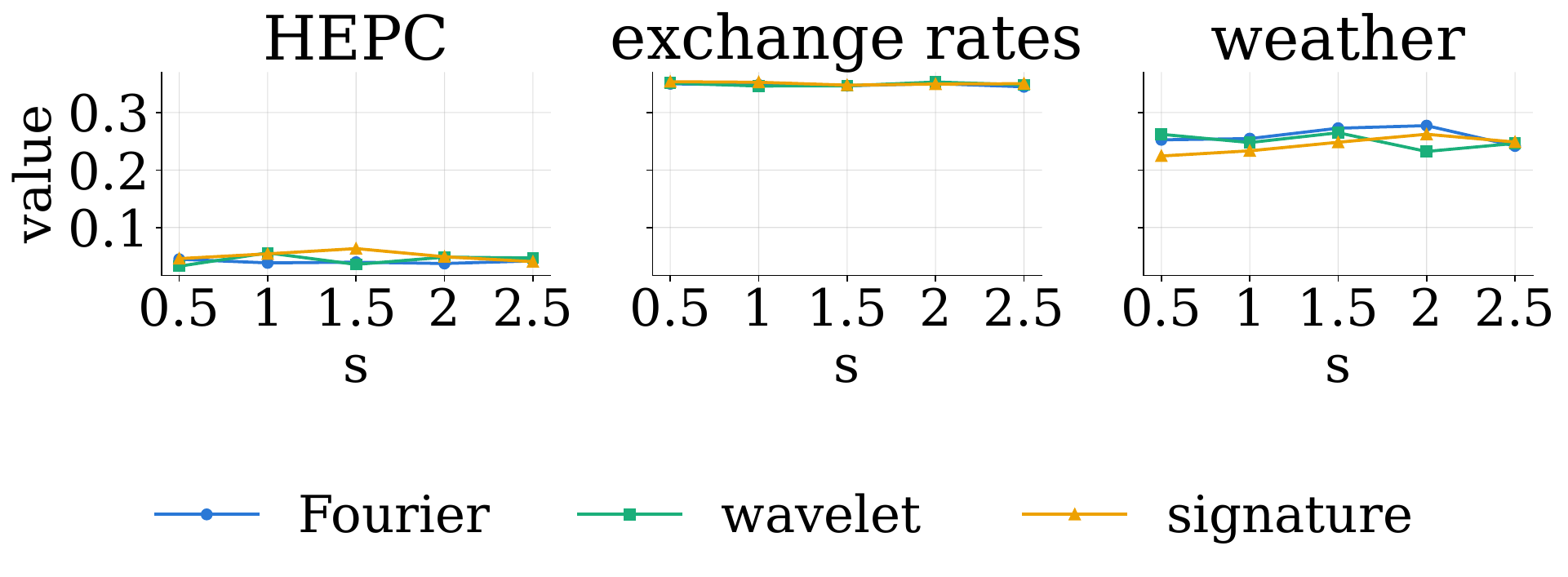}
    \caption{Marginal score}
\end{subfigure}
\caption{Effect of varying the Sobolev exponent $s$.}
\label{fig:sob_variation}
\end{figure*}

\section{Remarks on the use of signatures}
\label{sec:signature_limitations}

The signature transform is unique with respect to the Fourier and Wavelet transforms, in the sense that it is an infinite series of coefficients. High-dimensional series may prove too costly for the use of the signature transform as a transform-consistency loss when intended to be used with more depth in high-dimensional datasets. For instance, for the Weather dataset (of dimensionality 14), we are only able to use up to a signature depth of 3 before this representation becomes too large for a consumer-level GPU. While the dimensionality of log-signatures is lower than that of signatures, their computation in practice relies on computing full signatures first, thus not alleviating the GPU memory problem. As reported in Appendix \ref{sec:exp_details}, fine-tuning with the signature loss entails a significantly higher cost when compared to using linear transforms such as Fourier or wavelet, linked to their computation (and their gradients) more than to the dimensionality of the vectors themselves.

The resolution-free nature of the signature transform has the advantage that its dimension does not increase as the number of measurements of a signal increases, but it also means that the signature may end up being an under-representation of the signal and not capture the high-level behaviour of it if the depth is not enough. While the signature alignment presents a similar improvement to that of the linear transforms for the weather dataset ($14$ dimensions), it presents less improvement for the HEPC and exchange rates datasets. For the former, as shown in Table \ref{tab:sig_depth}, for which the original signals are one-dimensional, applying the signature transform to the time-augmented path leads to dimensionalities that can be insufficient to characterise the local behaviour. The performance does improve for depth 7, but the long alignment times that this entails make it impractical. While this is a limitation of this particular variant, the signature alignment might still prove useful for signals with high dimensionality, such as the weather dataset, where even a depth of 3 encodes sufficiently useful information for alignment.

\begin{table}[!ht]
\centering
\caption{Change in fine-tuning times (25 epochs), dimensionality, and discriminative score for varying signature transform depth as observed for the HEPC dataset. The abbreviation disc. stands for the discriminative score, and the 0 before the decimal is omitted from the discriminative score results.}
\label{tab:sig_depth}
\small{
\begin{tabular}{@{}cccccc@{}}
\toprule
depth & 3 & 4 & 5 & 6 & 7 \\ \midrule
time & 2m 33s & 4m 15s & 6m 35s & 9m 29s & 12m 50s \\
dim & 14 & 30 & 62 & 126 & 254 \\
disc. & .21±.13 & .17±.12 & .18±.14 & .17±.12 & .12±.11 \\ \bottomrule
\end{tabular}
}
\end{table}


\printcredits

\bibliographystyle{cas-model2-names}

\bibliography{references}



\end{document}